\definecolor{red}{rgb}{0.1,0.1,0.8}
\definecolor{green}{rgb}{0,0.4,0}
\newcommand{\change}[2]{}
\newcommand{\lchange}[2]{}
\newcommand{\changed}[3]{#3}
\newcommand{\vomega}{\boldsymbol{\omega}}
\newcommand{\vb}{\boldsymbol{b}}
\newcommand{\vx}{\boldsymbol{x}}
\newcommand{\vw}{\boldsymbol{w}}
\newcommand{\vy}{\boldsymbol{y}}
\newcommand{\mA}{\boldsymbol{A}}
\newcommand{\mC}{\boldsymbol{C}}
\newcommand{\mK}{\boldsymbol{K}}
\newcommand{\mI}{\boldsymbol{I}}
\newcommand{\mX}{\boldsymbol{X}}
\newcommand{\mM}{\boldsymbol{M}}
\newcommand{\mU}{\boldsymbol{U}}
\newcommand{\mW}{\boldsymbol{W}}
\newcommand{\mV}{\boldsymbol{V}}
\newcommand{\mZ}{\boldsymbol{Z}}
\definecolor{darkpastelpurple}{rgb}{0.59, 0.44, 0.84}
\theoremstyle{thmstyleone}%
\newtheorem{theorem}{Theorem}
\newtheorem{lemma}[theorem]{Lemma}
\newtheorem{corollary}[theorem]{Corollary}
\theoremstyle{thmstyletwo}%
\newtheorem{remark}{Remark}%
\theoremstyle{thmstylethree}%
\begin{document}


\title[Article Title]{MUSO: Achieving Exact Machine Unlearning in Over-Parameterized Regimes}


\author[1]{\fnm{Ruikai} \sur{Yang}}\email{ruikai.yang@sjtu.edu.cn}

\author[1]{\fnm{Mingzhen} \sur{He}}\email{mingzhen\_he@sjtu.edu.cn}

\author[1]{\fnm{Zhenghao} \sur{He}}\email{lstefanie@sjtu.edu.cn}

\author[1]{\fnm{Youmei} \sur{Qiu}}\email{qiuyoumei@sjtu.edu.cn}

\author*[1]{\fnm{Xiaolin} \sur{Huang}}\email{xiaolinhuang@sjtu.edu.cn}

\affil[1]{\orgdiv{Institute of Image Processing and Pattern Recognition, Department of Automation}, \orgname{Shanghai Jiao Tong University}, \orgaddress{\street{800 Dongchuan RD}, \city{Shanghai}, \postcode{200240}, \country{China}}}


\abstract{
Machine unlearning (MU) is to make a well-trained model behave as if it had never been trained on specific data. In today's over-parameterized models, dominated by neural networks, a common approach is to manually relabel data and fine-tune the well-trained model. It can approximate the MU model in the output space, but the question remains whether it can achieve exact MU, i.e., in the parameter space. We answer this question by employing random feature techniques to construct an analytical framework. Under the premise of model optimization via stochastic gradient descent, we theoretically demonstrated that over-parameterized linear models can achieve exact MU through relabeling specific data. We also extend this work to real-world nonlinear networks and propose an alternating optimization algorithm that unifies the tasks of unlearning and relabeling. The algorithm's effectiveness, confirmed through numerical experiments, highlights its superior performance in unlearning across various scenarios compared to current state-of-the-art methods, particularly excelling over similar relabeling-based MU approaches.
}

\keywords{machine unlearning, over-parameterized model, relabeling method, random features}



\maketitle

\section{Introduction}
The field of machine learning (ML) has undergone rapid development over the past few decades. With the ability to extract intrinsic information from provided data, ML has had a significant impact on various industries~\citep{bertsimas2022optimal, wang2023consensus, liu2024dynamic, yang2024data}. However, the increasing awareness of privacy concerns and stricter technological regulations~\citep{ginart2019making, guo2020certified}, such as the ``right to be forgotten''~\citep{hoofnagle2019european, goldman2020introduction}, have given rise to a field that runs counter to the goals of ML, referred to as the machine unlearning (MU)~\citep{cao2015towards}. MU aims to retract the influence of certain data on well-trained models, a.k.a. pre-trained models. For instance, a user may no longer wish to provide their information to companies for training ML models~\citep{ren2022prototype}, or the presence of sensitive or adverse data in the original training data could inject negative information into the model, necessitating the elimination of the impact of such information~\citep{wu2022puma}.

The most reliable method for MU is to retrain models from scratch, excluding the data intended to be forgotten or unlearned, which is considered the gold standard for MU~\citep{thudi2022necessity}. However, for many ML models, the extensive parameters and data requirements make the time and resource costs of retraining unmanageable. While several studies have achieved \textit{exact} MU, that is unlearned models completely revert to retrained models in parameter space, their primary limitations lie in their applicability to only certain simple models, such as support vector machines (SVM)~\citep{cauwenberghs2000incremental, romero2007incremental} and some low-dimensional linear models~\citep{tsai2014incremental, guo2020certified, izzo2021approximate}, or to specific unlearning scenarios~\citep{baumhauer2022machine}. Scalability also remains a significant challenge for some methods using the neural tangent kernel (NTK), which typically can only handle a few hundred data~\citep{golatkar2020forgetting}.

For today's ML models dominated by over-parameterized models, e.g., neural networks (NNs), it is challenging to achieve exact MU in the parameter space. Therefore, as long as the unlearned model matches the retrained model in metrics such as remaining accuracy, forgetting accuracy, and membership inference attacks (MIA)~\citep{shokri2017membership}, it is considered to achieve \textit{approximate} MU. \changed{M1.1}{\link{R2.1}}{More specifically, it can also be described using the (\(\epsilon\)-\(\delta\))-unlearning definition~\citep{triantafillou2024we, zhao2024makes, fan2024salun}, which quantifies the deviation between the distributions of the retrained model and the unlearned model, thereby characterizing the success of an unlearning algorithm.} A prevalent approach currently is to manually design data and then fine-tune models, termed optimization-based unlearning. Some methods focus on reinforcing the model's understanding of correct information, such as continuing to fine-tune a pre-trained model using the remaining data~\citep{warnecke2021machine}. On the other hand, certain approaches emphasize how to efficiently make the model forget information~\citep{thudi2022unrolling, shen2024camu}. This can be achieved by mislabeling the forgetting data~\citep{graves2021amnesiac,chundawat2023can}; however, these approaches tend to provide the model with incorrect information, leading to potential issues of over-forgetting~\citep{maini2023can}. To alleviate this issue, some research~\citep{he2024towards} merges the remaining data and the forgetting data while providing incorrect labels, aiming to achieve both the remembering and unlearning simultaneously. While empirically their designed merging approach shows promising results, the underlying mechanism remains unclear. This raises a key question:
\[
\emph{Can we achieve exact MU through optimization-based methods?}
\]
\changed{M1.2}{\link{R2.1} \\ \link{R2.2}}{To address this question, one challenge arises from models operating in the over-parameterized regime, where there exist countless points in the parameter space corresponding to the same output metric; they can all achieve approximate MU, but only the retrained model achieves exact MU. This also means that consistency with the retrained model in the parameter space is a sufficient but not necessary condition for consistency in the output space.} The second challenge is that the high non-linearity of NNs makes it difficult to characterize specific training dynamics, let alone attempting to manipulate the dynamics to achieve exact MU solely through data design. In this paper, we provide a positive answer to this question by resolving the first challenge and proposing an effective attempt to overcome the second difficulty.

We first use random feature (RF) techniques~\citep{rahimi2007random, liu2021random, he2024random} to construct an over-parameterized linear model and analyze the differences between various models in the unlearning task within this framework. Given the premise that the pre-training, unlearning, and retraining processes are optimized using stochastic gradient descent (SGD), we characterize the relationships in the parameter space among the models obtained from these three processes and then illustrate how the information from the remaining and forgetting data influences the differences between these models. Additionally, we prove that to achieve exact MU in over-parameterized linear models, it is only necessary to carefully adjust the labels of the forgetting data. These adjustments can be formulated as a function involving both the data and model parameters, without the need for explicit knowledge of the retrained model. 

To extend this idea to more practical models where linearity is often not applicable, we attempted to build an alternating iterative algorithm to enable the model to adaptively perform both training and relabeling tasks simultaneously.
We also apply down-sampling techniques and Woodbury matrix identity to specific variables to reduce computational and memory costs while maintaining effectiveness. This approach differs from some previous methods constrained by large-scale settings~\citep{golatkar2020forgetting, Li_2024_CVPR} and can efficiently extend to practical NN-based models. We validated our analysis with an over-parameterized linear model on the MNIST data set. By comparing the $\ell_2$ distance between the unlearned model and the retrained model, our label setting method precisely brings the difference back to zero, while other methods even amplify this metric. When tests are conducted based on ResNet18 with CIFAR data sets, our approach not only surpasses two similar relabeling-based methods but also proves competitive compared to the latest methods that focus on parameter selection. Moreover, our advantage is particularly pronounced in full-class and sub-class unlearning scenarios. The main contributions of this paper are summarized as follows:
\begin{itemize}
    \item We theoretically demonstrate that exact MU can be accomplished in the over-parameterized linear model through the optimization-based method.
    \item By aggregating the remaining data, forgetting data, and model information, we introduce an algorithm named MUSO to achieve exact MU by solely adjusting the labels of the forgetting data.
    \item Based on an alternating optimization framework with specific variable down-sampling, we extended the algorithm to NN models and large-scale data sets, enhancing its practical utility.
    \item Extensive experiments on real data sets not only validate our theory but also showcase the efficacy of MUSO, particularly demonstrating significant improvements in the MIA metric compared to other methods.
\end{itemize}

\section{Related Work}
Over the past few decades, extensive research has been conducted on the task of MU. This section will review the relevant work on exact MU and the approximate MU in NN-based models.

\textbf{Exact machine unlearning}. In the past, the process of removing the influence of partial training data from a pre-trained model was also referred to as \textit{decremental learning}, which was first explored in simple and physically meaningful models like SVM~\citep{cauwenberghs2000incremental}. Thanks to the fixed form of the solution and the model, exact MU can be achieved. Subsequently, works in~\citep{karasuyama2010multiple, tsai2014incremental, lee2019exact} have enhanced the number of data that the model can forget simultaneously. Some methods, based on the influence function instead~\citep{cook1980characterizations, koh2017understanding}, construct the responsibility of each data point towards the model to accomplish MU~\citep{guo2020certified}, but they are mostly based on low-dimensional linear models like least squares or logistic regression. To extend exact MU to more complex models, \citet{golatkar2020forgetting} proposed leveraging NTK for linear approximation of the model and performing one-step unlearning through computing the NTK matrices about the remaining data and the forgetting data. However, these types of models typically can only handle MU for data sets on the order of a few hundred instances, significantly limiting their practical applications.

\textbf{Approximate machine unlearning}. Given the vast number of parameters and diverse architectures of NN-based models today, it is challenging to exactly revert the pre-trained model to the retrained model in the parameter space. Therefore, we only aim for the unlearned model to exhibit approximate performance on certain specified metrics. In this regard, we first review optimization-based MU, which involves fine-tuning a pre-trained model with a manually adjusted data set. One approach is to fine-tune the model with forgetting data that are randomly labeled while keeping the labels of the remaining data unchanged~\citep{graves2021amnesiac}. Building upon this, \citet{fan2024salun} additionally introduce a weight saliency mask to restrict the model parameters that can be modified during fine-tuning. Following this, some work has employed a teacher-student framework, leveraging the outputs of a bad teacher as the labeling criterion for the forgetting data, and subsequently incorporating it along with the remaining data for MU~\citep{chundawat2023can, kurmanji2024towards}. As these methods may have limitations related to over-forgetting, \citet{he2024towards} aim to achieve more natural unlearning by integrating remaining data and forgetting data at the pixel level. Others perturb forgetting data randomly or adversarially to ensure that the predictions of the perturbed versions match the reference predictions, thus maintaining model performance~\citep{cha2024learning}. Additionally, there are also some optimization-free MU, such as simply performing gradient ascent on forgetting data~\citep{thudi2022unrolling} or selecting parameters most relevant to the forgetting data~\citep{foster2024fast}. However, these methods often impact the model's generalization performance, leading to less attention compared to optimization-based MU.

\section{Unlearning}
\subsection{Preliminaries}
\textbf{Notations}. The set of real numbers is written as $\mathbb{R}$. The set of integers from 1 to $N$ is written as $[N]$. We take $a$, $\boldsymbol{a}$, and $\boldsymbol{A}$ to be a scalar, a vector, and a matrix, respectively. Let $\mathrm{col}(\mA)$, $\mathrm{null}(\mA)$, $\mathrm{rank}(\mA)$, and $\mathrm{inv}(\mA)$ denote the column space, the null space, rank, and inverse matrix of $\mA$, respectively. The $n\times n$ dimensional identity matrix is written as $\mI_n$. 

\textbf{Machine Unlearning (MU)}. 
In machine learning (ML), we aim to learn a mapping function $f^\ast$ from the sample space $\mathcal{X}$ to the label space $\mathcal{Y}$. Given a data set $\mathcal{D} = \{\vx_i,y_i\}_{i=1}^N$ containing training samples $\vx_i\in\mathcal{X}\subset\mathbb{R}^d$ and labels $y_i\in\mathcal{Y}\subset\mathbb{R}$, with a pre-given task-driven loss $\ell(f(\vx), y)$, we can seek a mapping function $f$ in a suitable function space that minimizes this loss empirically to approximate $f^\ast$. Typically, this mapping function is parameterized by $\vw\in \mathbb{R}^p$. Due to factors such as protecting user data privacy, MU aims to mitigate the impact of a subset of data $\mathcal{D}_{u} = \{\vx_i^u,y_{i}^u\}_{i=1}^{N_u} \subset \mathcal{D}$ on the training of parameters $\vw$ without the need to train from scratch by using the remaining data set $\mathcal{D}_{r} = \mathcal{D}\backslash \mathcal{D}_{u} = \{\vx_i^r,y_{i}^r\}_{i=1}^{N_r}$ (which is time and resource consuming). Assuming that the ML process can be represented by algorithm $\mathcal{A}$, MU necessitates designing a unlearning mechanism $\mathcal{U}$ to derive the unlearned model  $\vw_u=\mathcal{U}(\vw_p, \mathcal{D}_r, \mathcal{D}_u)$ from the pre-trained model $\vw_p$ and to approximate the potentially existing $\vw_r = \mathcal{A}(\mathcal{D}_r)$ as closely as possible. Note again that the time required for $\mathcal{U}$ should be significantly less than that of $\mathcal{A}$. In this context, optimization-based MU as a significant branch of MU, has demonstrated excellent performance in many scenarios and tasks. It often employing manually adjusted data $\tilde{\mathcal{D}}\coloneqq \mathcal{D}_r \cup \tilde{\mathcal{D}}_u$ to fine-tune $\vw_p$ and obtain the $\vw_u$. We illustrate this kind of MU method in Figure ~\ref{fig:lct}.

\textbf{Random Features (RF) and Over-Parameterized Models}.
RF~\citep{rahimi2007random} has garnered widespread attention and application due to its ability to establish a model that strikes a balance between being easily analyzable and having practical performance~\citep{liu2022double, montanari2022interpolation, bosch2023precise, yang2024decentralized}. To achieve MU by only modifying the labels in the forgetting data set, we start with RF technology to construct an over-parameterized model. Given $\vx \in \mathbb{R}^d$, RF samples $\{\vomega_i\}_{i=1}^D, \vomega_i \in \mathbb{R}^d$ from a pre-defined probability density $p(\vomega)$ to construct an explicit mapping $z(\cdot)$ that maps the data from the original $\mathbb{R}^d$ space to a higher-dimensional space $\mathbb{R}^D$ (typically $D \gg d$). In this case, $z(\vx) = \sigma(\mW\vx)\in\mathbb{R}^D$, where $\mW=[\omega_1, \ldots, \omega_D]^\top\in\mathbb{R}^{D\times d}$ and $\sigma(\cdot)$ is a proper nonlinear activation function.
When $p(\cdot)$ follows a multivariate Gaussian distribution, this model is equivalent to a two-layer neural network (NN) initialized with random Gaussian weights, with only the output layer being optimized. In this scenario, with $Z(\mX)\coloneqq [z(\vx_1), \ldots, z(\vx_N)]\in \mathbb{R}^{D\times N}, \vy \coloneqq [y_1;\ldots;y_N]\in\mathbb{R}^{N}$ given and mean squared error (MSE) loss selected, the optimization objective would be to minimize $\mathcal{L}=\frac{1}{N}\|Z(\mX)^\top\vw - \vy\|_2^2$. For over-parameterized models, such as the popular NN-based models nowadays, this loss can always be optimized to 0 and may have infinitely many solutions. Previous work has shown that, under the optimization of stochastic gradient descent (SGD), this task is equivalent to finding a solution with the smallest $\ell_2$ distance from the initial parameters $\vw_{0}$~\citep{gunasekar2018characterizing, lin2023theory}, i.e.:
\begin{equation}\label{equ: question}
    \min_{\vw}\quad\|\vw - \vw_{0}\|_2^2,\quad\mathrm{s.t.}\ Z(\mX)^\top \vw = \vy.
\end{equation}
Next, we will present the distance in parameter space between the unlearned model $\vw_u$ and the retrained model $\vw_r$ within this framework.

\begin{figure}[tp]
\centering
\includegraphics[width=1\textwidth]{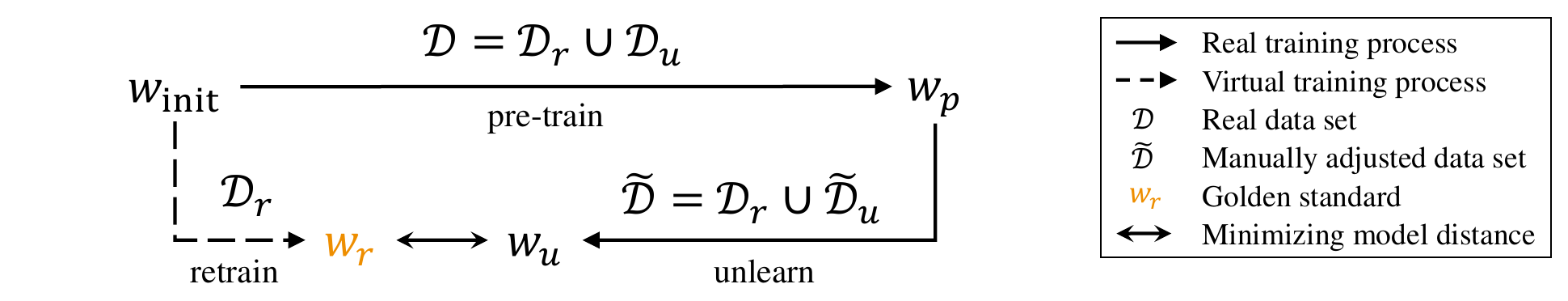}
\caption{Diagram of optimization-based MU: Let a neural network start from an initial value $\vw_{\mathrm{init}}$, go through pre-training to obtain model $\vw_p$, and then further obtain $\vw_u$ through MU. The aim is to closely align $\vw_u$ with the potentially existing $\vw_r$.} 
\label{fig:lct}
\end{figure}

\subsection{Relabeled forgetting data to achieve exact MU}\label{sec: 3.2}

\subsubsection{A review of relabeling-based MU and proof outline}

In this section, we will demonstrate that adjusting the labels of $\mathcal{D}_u$ is sufficient to align the model $\vw_u$ obtained through this process with the gold-standard model $\vw_r$ trained with the remaining data set $\mathcal{D}_r$ from $\vw_{\mathrm{init}}$. And throughout this paper, we focus on the optimization-based MU as illustrated in Figure~\ref{fig:lct}. It fine-tunes pre-trained models using a manually adjusted data set $\tilde{\mathcal{D}} \coloneqq \mathcal{D}_r \cup \tilde{\mathcal{D}}_u$, where $\mathcal{D}_{r} =  \{\vx_i^r,y_{i}^r\}_{i=1}^{N_r}$ is utilized to help the new model retain knowledge and generalization capabilities on the remaining data, while $\tilde{\mathcal{D}}_{u} = \{\tilde{\vx}_i^u,\tilde{y}_i^u\}_{i=1}^{N_u} $ guides the model on how to forget specific knowledge. 

In most cases of this MU approach, adjustments are made to the target labels $\tilde{y}_i^u$ rather than $\vx_i^u$ when constructing $\tilde{\mathcal{D}}_u$. In the Amnesiac method proposed by~\citet{graves2021amnesiac}, $\tilde{y}_i^u$ is randomly selected from any class other than the original one, i.e., $\tilde{y}_i^u = y^{\mathrm{rand}}_i \neq y^u_i$. For BadTeacher~\citep{chundawat2023can}, it requires constructing a randomly initialized model $f_{\mathrm{BT}}(\vx; \vw_{\mathrm{rand}})$ as the teacher and using the output of this model for a specific instance as $\tilde{y}_i^u = f_{\mathrm{BT}}(\vx_i^u; \vw_{\mathrm{rand}})$. Additionally, \citet{he2024towards} argue that these algorithms in order to ensure the extent of model unlearning, may excessively introduce erroneous information, leading to over-forgetting in models. Therefore, they design $\tilde{\vx}_i^u$ as a mixture of $\vx_i^u$ and another $\vx_p^r$, while simultaneously modifying the labels $\tilde{y}_i^u = y_p^r$, with the intention of achieving a more natural model unlearning process. However, these studies have not extensively explored what kind of label settings are theoretically optimal, and it remains unclear whether it is necessary to simultaneously modify both data and labels to accomplish this task. We also firmly believe that the assignment of labels should not be random but rather determined by the information present in the data. Therefore, our work attempts to bridge the gap between the theoretical aspects and practical performance of such MU methods, and provide the optimal $\tilde{y}_i^u$ when $\vw_r$ remains unknown. This goal will be achieved through the process depicted in Figure~\ref{fig: roadmap}.

\textbf{Proof Outline}. First, we leverage Lemma~\ref{lemma:1} to establish the relationships between $\vw_\mathrm{init}$, $\vw_p$, $\vw_u$, and $\vw_r$. Then, Lemma~\ref{lemma:2} introduces a block matrix inversion decomposition, allowing $\vw_p$ to serve as a bridge to formulate both $\vw_r - \vw_p$ and $\vw_p - \vw_u$, leading to Lemma~\ref{lemma:wrwpwu}. Since these two terms share common coefficients, they can be combined and simplified to derive Theorem~\ref{theo: main}: $\vw_r - \vw_u = \mC(\tilde{\vy}_u - \vb)$. A natural corollary (Corollary~\ref{coro: main}) ensures that $\vw_u = \vw_r$ when $\tilde{\vy}_u = \vb$. It is important to note that the matrices $\mC$ and $\vb$ depend solely on $\mathcal{D}_r$, $\mathcal{D}_u$, $\vw_{\mathrm{init}}$, and $\vw_p$.

\begin{figure}[tp]
\centering
\includegraphics[width=1\textwidth]{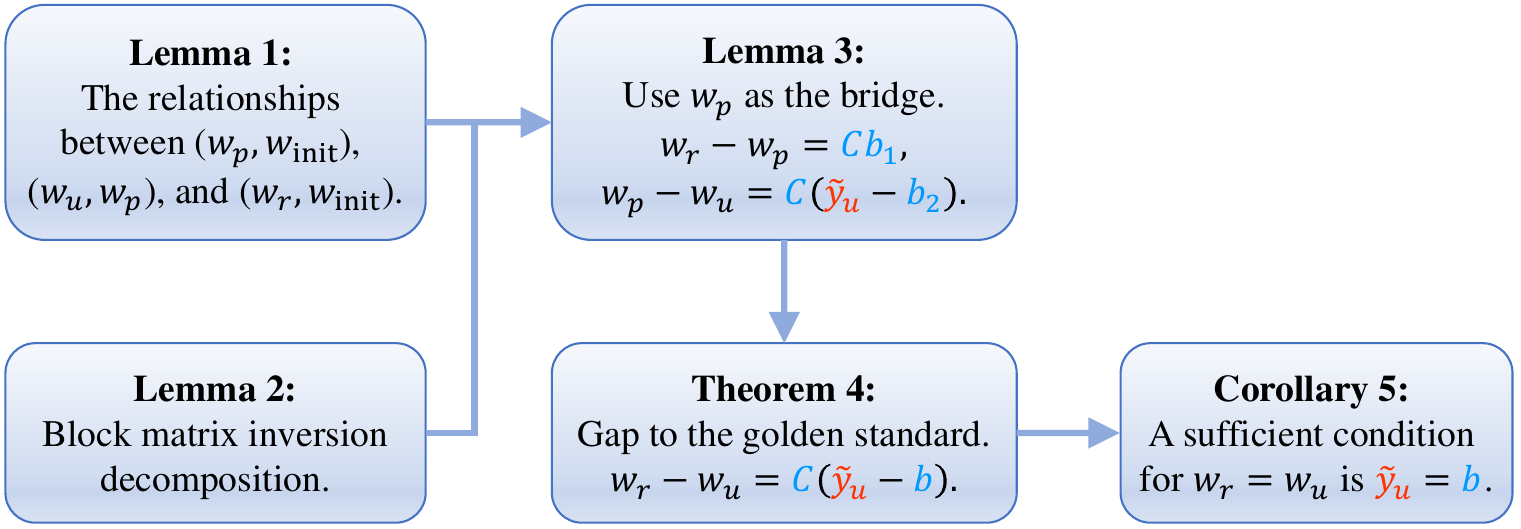}
\caption{The roadmap of proofs. The black vectors represent the model parameters, while the blue matrices and vectors are defined by the remaining data, forgetting data, $\vw_{\mathrm{init}}$ and $\vw_p$. The red $\tilde{\vy}_u$ denotes labels that can be manually adjusted.} 
\label{fig: roadmap}
\end{figure}

\subsubsection{Main results}\label{sec: 3.2.2}

To maintain generality and alleviate the complexity of analysis, we assume that all training processes in Figure~\ref{fig:lct} operate under SGD, one of the most commonly employed strategies for training NNs. In this scenario, for an over-parameterized linear model $f(\vx)=z(\vx)^\top \vw$, its optimization objective is equivalent to the problem~\eqref{equ: question}, allowing us to determine the form of the model's converged parameters using the following lemma.
\begin{lemma} \citep{lin2023theory}
    For the optimization problem~\eqref{equ: question}, the closed-form solution exists in the over-parameterized regime, which is given by
    \begin{equation}
        \vw = \vw_0 + Z(\mX)\left[Z(\mX)^\top Z(\mX)\right]^{-1}\left[\vy - Z(\mX)^\top \vw_0\right].
    \end{equation}
    \label{lemma:1}
\end{lemma}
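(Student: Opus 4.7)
The plan is to treat problem~\eqref{equ: question} as a classical minimum-norm problem with affine constraints and produce the closed form via either Lagrangian stationarity or an orthogonal-projection argument. Since the objective $\|\vw-\vw_0\|_2^2$ is strictly convex and the constraints $Z(\mX)^\top\vw=\vy$ are linear and feasible (in the over-parameterized regime $D\geq N$), KKT conditions are both necessary and sufficient, so it suffices to exhibit a $\vw$ satisfying stationarity and primal feasibility.

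First I would form the Lagrangian $L(\vw,\boldsymbol{\lambda}) = \|\vw-\vw_0\|_2^2 + \boldsymbol{\lambda}^\top(Z(\mX)^\top\vw-\vy)$. Setting $\nabla_{\vw}L = 2(\vw-\vw_0)+Z(\mX)\boldsymbol{\lambda}=\vzero$ gives $\vw - \vw_0 = -\tfrac{1}{2}Z(\mX)\boldsymbol{\lambda}$, which in particular shows that the optimizer differs from $\vw_0$ only along $\mathrm{col}(Z(\mX))$. Substituting into the constraint yields
\begin{equation*}
    Z(\mX)^\top Z(\mX)\,\boldsymbol{\alpha} = \vy - Z(\mX)^\top\vw_0,
\end{equation*}
with $\boldsymbol{\alpha}\coloneqq -\tfrac{1}{2}\boldsymbol{\lambda}$. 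Under the mild assumption that $Z(\mX)$ has full column rank (which generically holds in the RF setting when $D>N$ and $p(\vomega)$ is non-degenerate), the Gram matrix $Z(\mX)^\top Z(\mX)\in\mathbb{R}^{N\times N}$ is invertible, so $\boldsymbol{\alpha}=[Z(\mX)^\top Z(\mX)]^{-1}[\vy - Z(\mX)^\top\vw_0]$ and the claimed formula follows. A verification step would then plug this $\vw$ back into the constraint and confirm equality, clinching the proof by convexity.

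As a sanity cross-check, I would also present the projection viewpoint: the feasible set $\{\vw:Z(\mX)^\top\vw=\vy\}$ is an affine subspace with direction $\mathrm{null}(Z(\mX)^\top)$, and the minimizer of $\|\vw-\vw_0\|_2^2$ over it is the orthogonal projection of $\vw_0$, which must satisfy $\vw-\vw_0\in\mathrm{null}(Z(\mX)^\top)^\perp = \mathrm{col}(Z(\mX))$. Writing $\vw=\vw_0 + Z(\mX)\boldsymbol{\alpha}$ and enforcing feasibility reproduces the same linear system as above; this gives a coordinate-free justification of why the coefficient matrix takes the form $Z(\mX)[Z(\mX)^\top Z(\mX)]^{-1}$.

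The only genuine subtlety is not the algebra, which is routine, but the implicit claim that SGD on the unregularized MSE loss converges precisely to this minimum-$\ell_2$-distance iterate rather than to some other interpolating solution. That fact is what makes problem~\eqref{equ: question} the correct characterization in the first place, and is the content of the implicit-bias results cited from \citet{gunasekar2018characterizing} and \citet{lin2023theory}; I would therefore not reprove it here but invoke it as the bridge between the SGD dynamics and the constrained minimum-norm program, after which the closed form above is immediate.
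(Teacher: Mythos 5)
Your derivation is correct. The paper does not actually prove Lemma~\ref{lemma:1} --- it imports it verbatim from \citet{lin2023theory} --- so there is no in-paper argument to compare against; your Lagrangian/KKT route (equivalently, the orthogonal-projection view) is the standard and sound way to obtain the closed form. The two substantive points you flag are handled appropriately: (i) invertibility of $Z(\mX)^\top Z(\mX)$ requires $Z(\mX)$ to have full column rank, an assumption the paper itself makes implicitly throughout (e.g., $\mK_{rr}^{-1}$ and $\mM$ in Lemma~\ref{lemma:2} only exist under the same condition), so stating it explicitly is if anything an improvement; and (ii) the identification of the SGD limit with the minimum-$\ell_2$-distance interpolant is the genuinely nontrivial content, which both you and the paper correctly treat as an imported implicit-bias result rather than something to reprove. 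One cosmetic remark: verifying feasibility of the candidate $\vw$ is immediate since $Z(\mX)^\top\vw = Z(\mX)^\top\vw_0 + [Z(\mX)^\top Z(\mX)][Z(\mX)^\top Z(\mX)]^{-1}[\vy - Z(\mX)^\top\vw_0] = \vy$, so the "verification step" you defer is a one-line computation.
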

From this, we can derive the expressions for the pre-trained model $\vw_p$, the unlearned model $\vw_u$, and the retrained model $\vw_r$. For brevity, we represent the data set as matrices $\tilde{\mathcal{D}}=\mathcal{D}_r \cup \tilde{\mathcal{D}}_u = \{\mX_r, \vy_r\}\cup \{\mX_u, \tilde{\vy}_u\}$, and let $\mX_a = [\mX_r\ \mX_u]\in\mathbb{R}^{d\times N}$, $\vy_a = [\vy_r;\vy_u]\in\mathbb{R}^{N}$, $\mZ_a = [\mZ_r\ \mZ_u] = [Z(\mX_r)\ Z(\mX_u)]\in\mathbb{R}^{D\times N}$, and $\tilde{\vy}_a=[\vy_r; \tilde{\vy}_u]\in\mathbb{R}^{N}$. By utilizing Lemma~\ref{lemma:1}, we can obtain:
\begin{subequations}
    \begin{align}
    \vw_p &= \vw_{\mathrm{init}}+\mZ_a\left(\mZ_a^\top\mZ_a\right)^{-1}\left(\vy_a - \mZ_a^\top\vw_{\mathrm{init}}\right),\label{equ:wp}\\
    \vw_u &= \vw_{p}+\mZ_a\left(\mZ_a^\top\mZ_a\right)^{-1}\left(\tilde{\vy}_a - \mZ_a^\top\vw_{p}\right),\label{equ:wu}\\
    \vw_r &= \vw_{\mathrm{init}}+\mZ_r\left(\mZ_r^\top\mZ_r\right)^{-1}\left(\vy_r - \mZ_r^\top\vw_{\mathrm{init}}\right)\label{equ:wr}.
    \end{align}
    \label{equ:wpwuwr}
\end{subequations}

\begin{lemma} \citep{zhang2006schur} Suppose $\mK_{aa} = \mZ_a^\top\mZ_a$, $\mK_{rr} = \mZ_r^\top\mZ_r$, $\mK_{ru} = \mZ_r^\top\mZ_u$ and $\mK_{uu} = \mZ_u^\top\mZ_u$, we have
    \begin{equation}
        \mK_{aa}^{-1}=
        {\begin{bmatrix}
            \mK_{rr} & \mK_{ru} \\
            \mK_{ur} & \mK_{uu}
        \end{bmatrix}}^{-1}
        =
        \begin{bmatrix}
            \mK_{rr}^{-1}+\mK_{rr}^{-1}\mK_{ru}\mM\mK_{ur}\mK_{rr}^{-1} & -\mK_{rr}^{-1}\mK_{ru}\mM \\
            -\mM\mK_{ur}\mK_{rr}^{-1} & \mM
        \end{bmatrix},
        \label{equ: lemma2}
    \end{equation}
    where $\mM \coloneqq \left(\mK_{uu} - \mK_{ur}\mK_{rr}^{-1}\mK_{ru}\right)^{-1}$.
    \label{lemma:2}
\end{lemma}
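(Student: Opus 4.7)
The plan is to verify the block inversion formula directly, since this is the classical Schur complement identity for a symmetric block matrix. First I would observe that the $2\times 2$ block structure of $\mK_{aa}$ follows immediately from the partition $\mZ_a = [\mZ_r\ \mZ_u]$: multiplying out $\mZ_a^\top \mZ_a$ yields exactly the four blocks $\mK_{rr}$, $\mK_{ru}$, $\mK_{ur} = \mK_{ru}^\top$, and $\mK_{uu}$. I would note that the existence of $\mK_{rr}^{-1}$ and of $\mM$ is inherited from the over-parameterized, full-rank setting already invoked in Lemma~\ref{lemma:1}, where the Gram matrix of the feature map is assumed invertible on any sub-sample; under those assumptions both Schur complements that appear are well defined.

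The cleanest route is via a block LDU factorization: I would write
\begin{equation*}
\mK_{aa} = \begin{bmatrix} \mI & \vzero \\ \mK_{ur}\mK_{rr}^{-1} & \mI \end{bmatrix} \begin{bmatrix} \mK_{rr} & \vzero \\ \vzero & \mM^{-1} \end{bmatrix} \begin{bmatrix} \mI & \mK_{rr}^{-1}\mK_{ru} \\ \vzero & \mI \end{bmatrix},
\end{equation*}
which is immediate after expanding the product. Each of the three factors can then be inverted in closed form: the two unit block-triangular factors by negating their off-diagonal blocks, and the block-diagonal factor by inverting each block separately. Multiplying these three inverses in reverse order, and collecting the $(1,1)$ block as $\mK_{rr}^{-1} + \mK_{rr}^{-1}\mK_{ru}\mM\mK_{ur}\mK_{rr}^{-1}$, assembles exactly the matrix on the right-hand side of \eqref{equ: lemma2}. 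As a sanity check, one could alternatively multiply $\mK_{aa}$ by the proposed inverse block-by-block; all four block identities reduce to algebraic cancellations involving $\mM\mM^{-1} = \mI$ and $\mK_{rr}\mK_{rr}^{-1} = \mI$.

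The main obstacle is not conceptual but notational: tracking four symbolic blocks through the multiplication without sign or ordering errors, and making the role of $\mM$ explicit. Since the result is a classical Schur-complement identity cited to \citep{zhang2006schur}, no deeper machinery is required; I would present the LDU factorization, display the inverses of its three factors, and let the final block multiplication stand as a short verification.
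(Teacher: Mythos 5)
Your proposal is correct: the block LDU factorization you write down does multiply out to $\mK_{aa}$ (the $(2,2)$ block becomes $\mK_{ur}\mK_{rr}^{-1}\mK_{ru}+\mM^{-1}=\mK_{uu}$), and inverting the three factors and recombining them in reverse order yields exactly the right-hand side of \eqref{equ: lemma2}. Note, however, that the paper does not prove this lemma at all --- it is imported by citation from \citep{zhang2006schur} --- so there is no in-paper argument to compare against; what you have supplied is the standard Schur-complement derivation that the cited reference itself uses, and your remark that invertibility of $\mK_{rr}$ and of the Schur complement $\mM^{-1}$ follows from the positive-definiteness of the Gram matrix in the over-parameterized setting is a worthwhile addition, since the paper leaves that tacit.
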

For the equations in~\eqref{equ:wpwuwr}, the primary challenge in establishing the relationship between $\vw_r$ and $\vw_u$ is how to handle the inversions of matrices within them. Thanks to Lemma~\ref{lemma:2}, we are able to decompose each part of the inverse of a block matrix into functions related to sub-block matrices. And we utilize $\vw_p$ as a bridge to initially investigate the distances between $\vw_r$ and $\vw_p$, and between $\vw_p$ and $\vw_u$, which is formulated as the following lemma.

\begin{lemma} 
    The relationship between $\vw_r$ and $\vw_u$ can be decomposed using $\vw_p$ into two terms, each represented by the same matrix multiplied by different combination coefficients.
    \begin{subequations}
        \begin{align}
            \vw_r - \vw_p &= \left(\mI_{D} - \boldsymbol{\Pi}_r\right)\mZ_u\mM\left(\mK_{ur}\mK_{rr}^{-1}\left(\vy_r - \mZ_r^\top\vw_{\mathrm{init}}\right)+\mZ_u^\top\vw_{\mathrm{init}}- \vy_u\right), \label{equ: wr-wp}\\
            \vw_p - \vw_u &= \left(\mI_{D} - \boldsymbol{\Pi}_r\right)\mZ_u\mM\left(\mZ_u^\top\vw_{p} - \tilde{\vy}_u \right),  \label{equ: wp-wu}    
        \end{align}
        \label{equ: lemma3}
    \end{subequations}
    where $\boldsymbol{\Pi}_r\coloneqq\mZ_r(\mZ_r^\top \mZ_r)^{-1}\mZ_r^\top$.
    \label{lemma:wrwpwu}
\end{lemma}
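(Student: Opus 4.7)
The plan is to substitute the block decomposition of $\mK_{aa}^{-1}$ from Lemma~\ref{lemma:2} into the closed-form expressions~\eqref{equ:wp} and~\eqref{equ:wu}, expand everything as combinations of $\mZ_r$ and $\mZ_u$, and then recognize the projector $\boldsymbol{\Pi}_r=\mZ_r(\mZ_r^\top\mZ_r)^{-1}\mZ_r^\top$ via the identity $\mZ_r\mK_{rr}^{-1}\mK_{ru}=\boldsymbol{\Pi}_r\mZ_u$. Concretely, for any auxiliary vector $\vh=[\vh_r;\vh_u]\in\mathbb{R}^{N}$, multiplying out $\mZ_a\mK_{aa}^{-1}\vh$ using~\eqref{equ: lemma2} yields
\[
\mZ_a\mK_{aa}^{-1}\vh=\mZ_r\mK_{rr}^{-1}\vh_r+(\mI_D-\boldsymbol{\Pi}_r)\mZ_u\mM\bigl(\vh_u-\mK_{ur}\mK_{rr}^{-1}\vh_r\bigr),
\]
after collecting the four cross terms and factoring $(\boldsymbol{\Pi}_r-\mI_D)\mZ_u$. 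This is the single algebraic identity that drives both halves of the lemma.

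For equation~\eqref{equ: wr-wp}, I would apply the above identity to $\vw_p-\vw_{\mathrm{init}}$ with $\vh_r=\vy_r-\mZ_r^\top\vw_{\mathrm{init}}$ and $\vh_u=\vy_u-\mZ_u^\top\vw_{\mathrm{init}}$. The leading term $\mZ_r\mK_{rr}^{-1}(\vy_r-\mZ_r^\top\vw_{\mathrm{init}})$ is exactly $\vw_r-\vw_{\mathrm{init}}$ by~\eqref{equ:wr}, so subtracting~\eqref{equ:wr} from~\eqref{equ:wp} cancels that piece and leaves precisely
\[
\vw_p-\vw_r=-(\mI_D-\boldsymbol{\Pi}_r)\mZ_u\mM\bigl(\mK_{ur}\mK_{rr}^{-1}(\vy_r-\mZ_r^\top\vw_{\mathrm{init}})+\mZ_u^\top\vw_{\mathrm{init}}-\vy_u\bigr),
\]
which gives~\eqref{equ: wr-wp} after a sign flip.

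For equation~\eqref{equ: wp-wu}, I would apply the same identity to $\vw_u-\vw_p$ with $\vh_r=\vy_r-\mZ_r^\top\vw_p$ and $\vh_u=\tilde{\vy}_u-\mZ_u^\top\vw_p$. The crucial simplification here is that, in the over-parameterized regime, $\vw_p$ interpolates the pre-training data, so $\mZ_a^\top\vw_p=\vy_a$ and in particular $\mZ_r^\top\vw_p=\vy_r$. Hence $\vh_r=\vzero$, the leading $\mZ_r\mK_{rr}^{-1}\vh_r$ term vanishes, and the identity collapses to
\[
\vw_u-\vw_p=(\mI_D-\boldsymbol{\Pi}_r)\mZ_u\mM\bigl(\tilde{\vy}_u-\mZ_u^\top\vw_p\bigr),
\]
which rearranges to~\eqref{equ: wp-wu}.

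The main obstacle is purely bookkeeping: carefully expanding the $2\times 2$ block product, pairing the four terms as $\mZ_r(\cdots)-\mZ_u(\cdots)$, and verifying that the $(\boldsymbol{\Pi}_r-\mI_D)\mZ_u\mM$ factor cleanly appears. The only conceptual point that needs to be argued, rather than computed, is the interpolation property $\mZ_r^\top\vw_p=\vy_r$ used in the second half; this follows because~\eqref{equ:wp} is the SGD limit of~\eqref{equ: question} on $\tilde{\mathcal{D}}$, so $\mZ_a^\top\vw_p=\vy_a$ by the equality constraint, and the $\mZ_r$-block of this equality is exactly what is needed.
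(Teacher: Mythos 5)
Your proposal is correct and follows essentially the same route as the paper: substitute the block inverse from Lemma~\ref{lemma:2} into~\eqref{equ:wp} and~\eqref{equ:wu}, use $\mZ_r\mK_{rr}^{-1}\mK_{ru}=\boldsymbol{\Pi}_r\mZ_u$ to factor out $(\mI_D-\boldsymbol{\Pi}_r)\mZ_u\mM$, and invoke the interpolation property $\mZ_r^\top\vw_p=\vy_r$ for the second identity. Packaging the expansion as a single reusable identity for $\mZ_a\mK_{aa}^{-1}\vh$ is a tidier organization of the same algebra the paper carries out twice, but not a different argument.
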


\begin{proof}[Proof of Lemma~{\upshape\ref{lemma:wrwpwu}}]
    Starting from equation~\eqref{equ: lemma2}, we can decompose equation~\eqref{equ:wp} as follows:
\begin{equation}
    \begin{aligned}
        \vw_p =& \ \vw_{\mathrm{init}}+\mZ_a\left(\mZ_a^\top\mZ_a\right)^{-1}\left(\vy_a - \mZ_a^\top\vw_{\mathrm{init}}\right)\\
        =&\ \vw_{\mathrm{init}}+\left[\mZ_r\ \mZ_u\right]\begin{bmatrix}
            \mK_{rr}^{-1}+\mK_{rr}^{-1}\mK_{ru}\mM\mK_{ur}\mK_{rr}^{-1} & -\mK_{rr}^{-1}\mK_{ru}\mM \\
            -\mM\mK_{ur}\mK_{rr}^{-1} & \mM
        \end{bmatrix}\begin{bmatrix}
            \vy_r - \mZ_r^\top\vw_{\mathrm{init}} \\
            \vy_u - \mZ_u^\top\vw_{\mathrm{init}}
        \end{bmatrix}\\
        =&\ \vw_{\mathrm{init}}+\mZ_r\left(\mK_{rr}^{-1}+\mK_{rr}^{-1}\mK_{ru}\mM\mK_{ur}\mK_{rr}^{-1}\right)\left(\vy_r - \mZ_r^\top\vw_{\mathrm{init}}\right)\\
        &-\mZ_u \mM \mK_{ur}\mK_{rr}^{-1}\left(\vy_r - \mZ_r^\top\vw_{\mathrm{init}}\right)-\mZ_r \mK_{rr}^{-1} \mK_{ru} \mM \left(\vy_u - \mZ_u^\top\vw_{\mathrm{init}}\right)\\
        &+\mZ_u \mM \left(\vy_u - \mZ_u^\top\vw_{\mathrm{init}}\right).
    \end{aligned}
    \label{equ: wpwinit}
\end{equation}
Subtracting equation~\eqref{equ: wpwinit} from equation~\eqref{equ:wr}, we obtain
\begin{equation}
    \begin{aligned}
        \vw_r - \vw_p =&-\mZ_r\mK_{rr}^{-1}\mK_{ru}\mM\mK_{ur}\mK_{rr}^{-1}\left(\vy_r - \mZ_r^\top\vw_{\mathrm{init}}\right)+\mZ_u \mM \mK_{ur}\mK_{rr}^{-1}\left(\vy_r - \mZ_r^\top\vw_{\mathrm{init}}\right)\\
        &+\mZ_r \mK_{rr}^{-1} \mK_{ru} \mM \left(\vy_u - \mZ_u^\top\vw_{\mathrm{init}}\right)-\mZ_u \mM \left(\vy_u - \mZ_u^\top\vw_{\mathrm{init}}\right)\\
        =&\left(\mI_{D} - \boldsymbol{\Pi}_r\right)\mZ_u\mM\mK_{ur}\mK_{rr}^{-1}\left(\vy_r - \mZ_r^\top\vw_{\mathrm{init}}\right)\\
        &+\left(\mI_{D} - \boldsymbol{\Pi}_r\right)\mZ_u\mM\left(\mZ_u^\top\vw_{\mathrm{init}}- \vy_u\right)\\
        =&\left(\mI_{D} - \boldsymbol{\Pi}_r\right)\mZ_u\mM\left(\mK_{ur}\mK_{rr}^{-1}\left(\vy_r - \mZ_r^\top\vw_{\mathrm{init}}\right)+\mZ_u^\top\vw_{\mathrm{init}}- \vy_u\right).
    \end{aligned}
\end{equation}
Similarly, the equation~\eqref{equ:wu} can be rewritten as
\begin{equation}
    \begin{aligned}
        \vw_{p} - \vw_u =&\ \mZ_a\left(\mZ_a^\top\mZ_a\right)^{-1}\left(\mZ_a^\top\vw_{p}-\tilde{\vy}_a\right)\\
        =&\left[\mZ_r\ \mZ_u\right]\begin{bmatrix}
            \mK_{rr}^{-1}+\mK_{rr}^{-1}\mK_{ru}\mM\mK_{ur}\mK_{rr}^{-1} & -\mK_{rr}^{-1}\mK_{ru}\mM \\
            -\mM\mK_{ur}\mK_{rr}^{-1} & \mM
        \end{bmatrix}\begin{bmatrix}
            \mZ_r^\top\vw_{p} - \vy_r \\
            \mZ_u^\top\vw_{p} - \tilde{\vy}_u 
        \end{bmatrix}\\
        =&-\mZ_r\mK_{rr}^{-1}\mK_{ru}\mM\left(\mZ_u^\top\vw_{p} - \tilde{\vy}_u \right) + \mZ_u\mM\left(\mZ_u^\top\vw_{p} - \tilde{\vy}_u \right)\\
        =&\left(\mI_{D} - \boldsymbol{\Pi}_r\right)\mZ_u\mM\left(\mZ_u^\top\vw_{p} - \tilde{\vy}_u \right).
    \end{aligned}
\end{equation}
The third ``$=$'' stems from the model operating in the over-parameterized regime, leading to $\mZ_r^\top\vw_p - \vy_r = 0$. This completes the proof.
\end{proof}

\begin{remark}
    Due to $\left(\mI_{D} - \boldsymbol{\Pi}_r\right)\mZ_u$ is the orthogonal projection of $\mZ_u$ onto the left null space of $\mZ_r$~\citep{belkin2020two}, if $\mathrm{col}(\mZ_u) \subseteq \mathrm{col}(\mZ_r)$, the $\vw_r=\vw_p=\vw_u$ holds. At this point, it is evident that the network has not learned any additional information from $\mathcal{D}_u$, let alone the need to discuss the unlearning. In Appendix~\ref{appx: discussions}, we also discuss the scenario where $\tilde{\mZ}_u$ is set as a simple combination of $\mZ_r$ and $\mZ_u$.
\end{remark}

By adding the combination coefficients of the two equations in Lemma~\ref{lemma:wrwpwu} and leveraging the characteristic of over-parameterized models $\vy_u = \mZ_u^\top\vw_p$, $\vy_r = \mZ_r^\top\vw_r$, we derive the following theorem.

\begin{theorem} (Gap to the golden standard) When the optimization problem is solved by SGD, the relationship between $\vw_r$ and $\vw_u$ can be formulated as
    \begin{equation}      
        \vw_r - \vw_u =\left(\mI_{D} - \boldsymbol{\Pi}_r\right)\mZ_u\mM\left(\mZ_u^\top\boldsymbol{\Pi}_r\left(\vw_p - \vw_{\mathrm{init}}\right)+\mZ_u^\top\vw_{\mathrm{init}} - \tilde{\vy}_u\right).
        \label{equ: wr-wu}
    \end{equation}
    \label{theo: main}
\end{theorem}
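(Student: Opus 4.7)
The plan is to simply add the two identities proved in Lemma~\ref{lemma:wrwpwu} and then massage the resulting combined bracket into the form claimed in Theorem~\ref{theo: main}. Concretely, I would write
\[
\vw_r - \vw_u \;=\; (\vw_r - \vw_p) + (\vw_p - \vw_u),
\]
observe that both terms in Lemma~\ref{lemma:wrwpwu} share the common left factor $\left(\mI_D - \boldsymbol{\Pi}_r\right)\mZ_u\mM$, and factor it out, leaving a bracket of the form
\[
\mK_{ur}\mK_{rr}^{-1}\left(\vy_r - \mZ_r^\top \vw_{\mathrm{init}}\right) + \mZ_u^\top\vw_{\mathrm{init}} - \vy_u + \mZ_u^\top \vw_p - \tilde{\vy}_u.
\]

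Next I would exploit the over-parameterized / interpolation property of the pre-trained model, namely $\mZ_a^\top \vw_p = \vy_a$, which in particular gives $\mZ_u^\top \vw_p = \vy_u$ and $\mZ_r^\top \vw_p = \vy_r$. The first identity makes the pair $-\vy_u + \mZ_u^\top\vw_p$ cancel, reducing the bracket to $\mK_{ur}\mK_{rr}^{-1}(\vy_r - \mZ_r^\top \vw_{\mathrm{init}}) + \mZ_u^\top \vw_{\mathrm{init}} - \tilde{\vy}_u$. The second identity lets me rewrite $\vy_r - \mZ_r^\top \vw_{\mathrm{init}} = \mZ_r^\top(\vw_p - \vw_{\mathrm{init}})$, and then recognizing $\mK_{ur}\mK_{rr}^{-1}\mZ_r^\top = \mZ_u^\top \mZ_r(\mZ_r^\top\mZ_r)^{-1}\mZ_r^\top = \mZ_u^\top \boldsymbol{\Pi}_r$ turns the first summand into $\mZ_u^\top \boldsymbol{\Pi}_r (\vw_p - \vw_{\mathrm{init}})$, which matches the target expression exactly.

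There is essentially no hard step here; the work has already been done in Lemma~\ref{lemma:wrwpwu}, and Theorem~\ref{theo: main} is a bookkeeping corollary. The only place one has to be slightly careful is in noting that the cancellation $\vy_u = \mZ_u^\top \vw_p$ and the rewriting $\vy_r - \mZ_r^\top \vw_{\mathrm{init}} = \mZ_r^\top(\vw_p - \vw_{\mathrm{init}})$ both rely on $\vw_p$ interpolating the full training set $\tilde{\mathcal{D}}$, which is precisely the over-parameterized SGD assumption invoked earlier; if one tried to state the theorem without this assumption, the residual $\mZ_u^\top \vw_p - \vy_u$ would not vanish and the clean expression involving only $\vw_p$, $\vw_{\mathrm{init}}$ and the adjustable labels $\tilde{\vy}_u$ could not be obtained.
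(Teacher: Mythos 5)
Your proposal is correct and follows essentially the same route as the paper's own proof in Appendix~B: add the two identities of Lemma~\ref{lemma:wrwpwu}, factor out $\left(\mI_D-\boldsymbol{\Pi}_r\right)\mZ_u\mM$, cancel $\mZ_u^\top\vw_p-\vy_u$ via interpolation, and rewrite $\mK_{ur}\mK_{rr}^{-1}\mZ_r^\top\left(\vw_p-\vw_{\mathrm{init}}\right)$ as $\mZ_u^\top\boldsymbol{\Pi}_r\left(\vw_p-\vw_{\mathrm{init}}\right)$. Your closing remark correctly identifies the only substantive hypothesis, namely that $\vw_p$ interpolates $\tilde{\mathcal{D}}$ in the over-parameterized regime.
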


Proof can be found in Appendix~\ref{appx: proof}. Observing that the former $\left(\mI_{D} - \boldsymbol{\Pi}_r\right)\mZ_u\mM$ section relies solely on the data itself, while the latter part is linked to the labels of the forgetting data, we naturally arrive at the following sufficient condition for $\vw_r - \vw_u = 0$.

\begin{corollary} When we relabel the forgetting data as
    \begin{equation}
        \tilde{\vy}_u =\mZ_u^\top(\boldsymbol{\Pi}_r(\vw_p-\vw_{\mathrm{init}})+\vw_{\mathrm{init}}),
        \label{equ: setyu}
    \end{equation}
    the unlearned model will achieve exact MU, i.e., $\vw_u=\vw_r$.
    \label{coro: main}
\end{corollary}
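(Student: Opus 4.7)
The plan is to deduce Corollary~\ref{coro: main} as an essentially one-line consequence of Theorem~\ref{theo: main}. Recall that Theorem~\ref{theo: main} already gives the clean factorization
\[
\vw_r - \vw_u = \left(\mI_{D} - \boldsymbol{\Pi}_r\right)\mZ_u\mM\left(\mZ_u^\top\boldsymbol{\Pi}_r\left(\vw_p - \vw_{\mathrm{init}}\right)+\mZ_u^\top\vw_{\mathrm{init}} - \tilde{\vy}_u\right),
\]
so the entire dependence on the adjustable quantity $\tilde{\vy}_u$ is isolated in the rightmost parenthesis. The strategy is therefore to pick $\tilde{\vy}_u$ so that this inner vector vanishes; the outer matrix factor is then irrelevant and the whole right-hand side becomes zero.

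First I would substitute the candidate relabeling $\tilde{\vy}_u = \mZ_u^\top\left(\boldsymbol{\Pi}_r(\vw_p-\vw_{\mathrm{init}})+\vw_{\mathrm{init}}\right)$ into the rightmost parenthesis. Distributing $\mZ_u^\top$ by linearity and grouping like terms cancels both the $\mZ_u^\top\boldsymbol{\Pi}_r(\vw_p-\vw_{\mathrm{init}})$ contribution and the $\mZ_u^\top\vw_{\mathrm{init}}$ contribution, leaving the zero vector. Multiplying $(\mI_D - \boldsymbol{\Pi}_r)\mZ_u\mM$ by the zero vector then yields $\vw_r - \vw_u = \vzero$, which gives exact unlearning $\vw_u = \vw_r$ in parameter space.

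There is no real obstacle: all the analytical work has already been absorbed into Lemma~\ref{lemma:wrwpwu} and Theorem~\ref{theo: main}, and the corollary is a direct algebraic verification. The only subtlety worth flagging in the writeup is that the condition is sufficient but not necessary, since the map $\left(\mI_D-\boldsymbol{\Pi}_r\right)\mZ_u\mM$ can have a non-trivial kernel (indeed $\mathrm{col}(\mZ_u)\cap\mathrm{col}(\mZ_r)$ is annihilated by $\mI_D-\boldsymbol{\Pi}_r$, as already noted in the remark following Lemma~\ref{lemma:wrwpwu}). The merit of the specific choice~\eqref{equ: setyu} is that $\tilde{\vy}_u$ is expressed solely in terms of the available quantities $\mZ_u$, $\vw_p$, $\vw_{\mathrm{init}}$ and the projector $\boldsymbol{\Pi}_r$ built from $\mathcal{D}_r$, so exact MU is achieved without any access to $\vw_r$.
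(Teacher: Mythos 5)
Your proposal is correct and follows exactly the paper's route: substituting the choice~\eqref{equ: setyu} into the rightmost factor of Theorem~\ref{theo: main} makes that vector vanish, so $\vw_r - \vw_u = \vzero$. Your additional remark that the condition is sufficient but not necessary (because $\left(\mI_D-\boldsymbol{\Pi}_r\right)\mZ_u\mM$ may have a non-trivial kernel) is a correct and worthwhile observation consistent with the paper's own discussion.
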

The above corollary describes the optimal label setting of $\tilde{\vy}_u$, which can be viewed as a function of both model information $\vw_\mathrm{init}$, $\vw_p$, and data information $\mX_u$, $\mX_r$. Unlike the previous relabeling methods that essentially involve setting random labels~\citep{graves2021amnesiac, chundawat2023can}, thus introducing excessive incorrect information leading to over-forgetting risks, our approach provides the optimal relabeling strategy, which is also numerically confirmed in Section~\ref{sec: explinear}. 
Notably, although the model operates in an over-parameterized regime, the key lies in the fact that within SGD's optimization framework, the solutions at each process are essentially minimum norm solutions, enabling the achievement of a unique model through data relabeling and fine-tuning.

Thus, for the two main challenges in constructing the relationship between retrained models and unlearned models in modern MU: over-parameterization and non-linearity, we rigorously resolve the former with theoretical results. For the latter, we propose an approximation method in the subsequent section. \changed{M3.1}{\link{R2.2}}{Although in this scenario, the unlearning performance in the output space is not entirely equivalent to that in the parameter space, since MU is based on fine-tuning a pre-trained model, the changes in model parameters are not drastic. Within this small range, differences in the output and parameter space may still be correlated to some extent. The outcomes in Section~\ref{sec: expNN} also demonstrate its excellent performance.}

\subsection{Implementation in NN-based methods}
For the over-parameterized linear model mentioned above, we can extend it to NN-based models. We can consider the first $L-1$ layers of an $L$-layer NN as feature transformations on the input, acting as a mapping function $Z(\cdot)$. In this setup, the final fully connected layer plays the role of $\vw$. It is worth noting that for multi-class classification problems, we only need to modify the variables to $\vw\in\mathbb{R}^{D\times C}, \vy\in\mathbb{R}^{N\times C}$, where $C$ represents the number of classes. We can then use equation~(\ref{equ: setyu}) to assign labels to forgetting data. However, in practical implementation, this extension will mainly face two challenges. 

The first one arises from computing the matrix $\boldsymbol{\Pi}_r=\mZ_r(\mZ_r^\top \mZ_r)^{-1}\mZ_r^\top$. While computing the network's forward $Z(\mX_{\mathrm{batch}})$ for each batch of data is feasible, storing the entire $\mZ_r\in\mathbb{R}^{D\times N_r}$ and calculating $(\mZ_r^\top \mZ_r)^{-1}\in\mathbb{R}^{N_r\times N_r}$ becomes impractical when handling a large amount of remaining data. These processes incur storage costs of $\mathcal{O}(N_rD)$ and time costs of $\mathcal{O}(N_r^2D + N_r^3)$, respectively. 
Fortunately, the physical meaning of $\boldsymbol{\Pi}_r\vx$ is to project the vector $\vx$ onto the column space of $\mZ_r$. Therefore, we can attempt to approximate the actual $\mathrm{col}(\mZ_r)$ using the column space spanned by a subset of samples $\mZ_{r,s} = Z(\mX_{r,\mathrm{sub}})\in\mathbb{R}^{D\times N_{r,s}}$. A natural and simple approach is to randomly sample a portion of the remaining data. When $\mathrm{rank}(\mZ_r)\leqslant D\ll N_r$, there exists many linearly dependent columns in $\mZ_r$. Thus, by selecting a number of columns larger than the matrix rank, we have a high probability of preserving the matrix's linearly independent directions, effectively approximating the column space.
Additionally, there are some works that have proposed carefully designed methods for column selection~\citep{drineas2008relative, mahoney2009cur, halko2011finding}. These methods calculate leverage scores using the $k$ right singular vectors of the matrix, normalize them, and treat them as sampling probabilities for each column. Given that leverage scores reflect the relative importance of each column to the matrix operation, selecting a subset of data in this manner can lead to improved performance and stronger theoretical guarantees. Details of such \texttt{ColumnSelect} algorithms can be found in Appendix~\ref{appx: colselect}. Although sampling a subset of the data set can alleviate memory constraints, the $\mathcal{O}(N_{r,s}^3)$ complexity required to compute $(\mZ_{r,s}^{\top} \mZ_{r,s})^{-1}$ remains prohibitive. A useful trick is to add a small diagonal matrix $\lambda \mI_{N_{r,s}}$, which allows the use of the following Woodbury matrix identity:
\begin{equation}
    \left(\mA+\mU\mC\mV\right)^{-1} = \mA^{-1} - \mA^{-1}\mU\left(\mC^{-1}+\mV\mA^{-1}\mU\right)^{-1}\mV\mA^{-1},
\end{equation}
where $\mA, \mU, \mC, \mV$ are conformable matrices. By substituting $\mA = \lambda \mI_{N_{r,s}}$, $\mU = \mZ_{r,s}^{\top}$, $\mC = \mI_D$, and $\mU = \mZ_{r,s}$, $\boldsymbol{\Pi}_r$ can be approximately computed as follows: 
\begin{equation}\label{equ: comp_pir}
    \hat{\boldsymbol{\Pi}}_r = \mZ_{r,s}\left(\lambda^{-1}\mI_{N_{r,s}} - \lambda^{-1}\mZ_{r,s}^\top\left(\lambda\mI_D + \mZ_{r,s}\mZ_{r,s}^\top \right)^{-1} \mZ_{r,s}\right)\mZ_{r,s}^\top.
\end{equation}
\changed{M3.2}{\link{R2.3}}{In this case, we only require the storage cost of $\mathcal{O}(N_{r,s}D)$ and the time costs of $\mathcal{O}(N_{r,s}^2D + D^3)$. Since $D \ll N_{r,s}$, this makes the approach practical for real-world applications. It is worth noting that this approach achieves better acceleration when $D$ is typically in the range of hundreds and $N_{r,s}$ is in the range of tens of thousands, where $\mathcal{O}(D^3)$ dominates. However, when the number of samples reaches millions or even tens of millions, $\mathcal{O}(N_{r,s}^2D)$ becomes the dominant term, leading to challenging scenarios. How to handle very large-scale cases more effectively will be an extension direction for this work in the future.}

The second issue arises from the assumption that the mapping function $Z(\cdot)$ remains invariant, as in the over-parameterized linear model. This assumption deviates from the actual training dynamics of NNs. In our implementation of the NN-based method, the feature mapping function is determined by the parameters of the first $L-1$ layers, which change dynamically during training. Similarly, the parameters of the fully connected layer, represented as $\vw$, also evolve throughout the training process. Consequently, a single and consistent $Z(\cdot)$ cannot adequately describe all feature mappings during training. 
To address this challenge, we adopt an alternating optimization approach to alternate between network training and label setting to resolve this contradiction.

Note that in the following, $Z^{(t)}(\cdot)$ and $\tilde{\vy}_u^{(t)}$ are used to denote the forward pass of the preceding $L-1$ layers of the network and the assigned label at time $t$, respectively.
Revisiting Lemma~\ref{lemma:wrwpwu}, the sufficient condition for $\vw_r = \vw_u$ can be expressed as:
\begin{equation}
    \tilde{\vy}_u =\mZ_u^\top(\boldsymbol{\Pi}_r(\vw_p-\vw_{\mathrm{init}})+\vw_{\mathrm{init}})-\vy_u + \mZ_u^\top\vw_p.
\end{equation}
In the linear case, the relationship $\vy_u = \mZ_u^\top \vw_p$ always holds after pre-training. However, in the implementation of NN-based methods, the dynamic nature of $Z^{(t)}(\cdot)$ (will be reflected in $\mZ_u^{(t)}$) invalidates this relationship. As a result, we need to rewrite the above equation as follows:

\begin{equation}
    \begin{aligned}
        \tilde{\vy}_u^{(t)} = \mZ_u^{(t)\top}\left(\hat{\boldsymbol{\Pi}}_r^{(t)}(\vw_p - \vw_{\mathrm{init}}) + \vw_{\mathrm{init}}\right) - \vy_u +{\mZ_u^{(t)\top}}\vw_{p},
    \end{aligned}
    \label{equ: truesetyu}
\end{equation}
where $\hat{\boldsymbol{\Pi}}_r^{(t)}$ is constructed as described earlier. In this case, to prevent the greedy alternating optimization process from introducing excessive variations in each iteration, which could lead to algorithmic instability, it is crucial to ensure that $Z^{(t)}(\cdot)$ undergoes only minimal changes before and after unlearning. Fortunately, the nature of the unlearning task, along with certain common settings, helps mitigate this issue to a large extent. First, since unlearning is essentially a fine-tuning task, it typically requires only a small number of updates compared to the extensive adjustments made during network pre-training. Second, in practical fine-tuning scenarios, it is common to freeze certain network parameters~\citep{Li_2024_CVPR, tian2024forget}, especially those in the lower layers. Additionally, in the implementation of unlearning tasks, a very small learning rate, typically 1\%-10\% of the pre-training learning rate, is often used~\citep{shen2024camu, fan2024salun, kurmanji2024towards, cheng2024remaining}. These factors collectively help to constrain parameter changes, ensuring that the feature mapping function $Z^{(t)}(\cdot)$ exhibits the desired properties. The entire MU process by setting the optimal labels for forgetting data is summarized in Algorithm~\ref{Algo}.

\begin{algorithm}[tp]
    \caption{\textbf{M}achine \textbf{U}nlearning by \textbf{S}etting \textbf{O}ptimal labels (MUSO).}
    \label{Algo}
    \begin{algorithmic}[1]
        \Require the remaining data set $\mathcal{D}_r=\{\mX_r,\vy_r\}$, the forgetting data set $\mathcal{D}_u=\{\mX_u,\vy_u\}$, the pre-trained model $Z_p(\cdot), \vw_p$, the initial model $Z_{\mathrm{init}}(\cdot), \vw_{\mathrm{init}}$, the sample ratio $R < 1$, and a small value $\lambda$.
        \Ensure the unlearned model $Z_u(\cdot),\vw_u$.
        \State Initialize $Z_u^{(0)}(\cdot)=Z_p(\cdot)$, $\vw_u^{(0)}=\vw_p$ and $t=0$.
        \Repeat 
        \State $\mX_{r,\mathrm{sub}}\in\mathbb{R}^{d\times RN}$ by randomly or  \texttt{ColumnSelect} (See in  Algorithm~\ref{Algo: CS}).
        
        \State Construct $\mZ_{u}^{(t)}$, $\mZ_{r,s}^{(t)}$ by using the first $L-1$ layers of the NN.
        \State Compute $\hat{\boldsymbol{\Pi}}_r^{(t)}$ by equation (\ref{equ: comp_pir}).
        \State Update $\tilde{\vy}_u^{(t)}$ by equation~(\ref{equ: truesetyu}).
        \State Doing gradient descent on the unlearned model with manual data set $\tilde{\mathcal{D}}=\{\mX_r,\vy_r\}\cup\{\mX_u,\tilde{\vy}_u^{(t)}\}$
        \State $t = t +1$
        \Until the algorithm converges or reaches the maximum number of iterations.
    \end{algorithmic}
\end{algorithm}

\subsection{Discussion with neural tangent kernel}
The neural tangent kernel (NTK)~\citep{jacot2018neural}, as an analysis technique that linearizes the training dynamics of NNs, has garnered significant attention in today's NN-based research. In recent years, some studies have explored the MU within the NTK framework~\citep{golatkar2020eternal, golatkar2020forgetting, golatkar2021mixed, Li_2024_CVPR}, which shares some similarities with our work. Roughly speaking, these works can be seen as considering the derivative of the output $f$ with respect to the full network parameters as the mapping function, denoted as $z(\vx)\leftarrow \nabla_{\vw} f(\vx)$. Subsequently, the parameters of the unlearned model are computed by constructing the NTK matrix between the forgetting and remaining data, i.e., $\Theta(\mX_u,\mX_r) = \nabla_{\vw} f(\mX_u)^\top\nabla_{\vw} f(\mX_r)\in\mathbb{R}^{N_uC\times N_rC}$. In this approach, $\Delta\vw = \vw_p - \vw_u$ can be obtained in a single step through closed-form solutions, eliminating the need for iterative training. Although these techniques provide robust theoretical assurances, their principal constraint is their scalability. The vast number of parameters in contemporary mainstream NN frameworks leads to extremely low computational efficiency and significantly high storage requirements when handling NTK matrices, making such MU approaches typically viable only for data sets comprising a few hundred instances. To address this constraint, the only current attempt involves freezing certain layers to reduce the number of network parameters that need adjustment~\citep{Li_2024_CVPR}. Additionally, \citet{golatkar2020forgetting} pointed out that the outputs of such linear dynamic models do not align well with real outputs, necessitating a technique called the ``trapezium trick'' for renormalizing $\Delta\vw$. In contrast, our MU method simplifies the model optimization process by solely adjusting the labels of the forgetting data. Moreover, in matrix operations related to data volume, the special properties of the projection matrix enable us to utilize sub-data sets for computation, significantly reducing the required computational and storage costs.

\section{Numerical Experiments}
In this section, we will explore various unlearning scenarios, beginning with validating our theory under the over-parameterized linear model and then assessing the performance of our proposed MUSO in a real NN model.
\subsection{Experimental Settings}\label{sec: expsetting}
\textbf{Data sets and unlearning scenarios.} For the over-parameterized linear model, following the work of~\citep{guo2020certified, liu2022double}, we will validate the correctness of our derived formulas using the MNIST data set~\citep{lecun1998gradient}. \changed{M4.1}{\link{R2.3} \\ \link{R2.4}}{For the NN model, we will utilize CIFAR-10, CIFAR-20, CIFAR-100~\citep{krizhevsky2009learning}, and TinyImageNet-200~\citep{le2015tiny} to test the forgetting capabilities of different MU algorithms in image classification tasks.}

Our testing scenarios will encompass three forgetting mechanisms: full-class unlearning, sub-class unlearning, and random unlearning, aiming to evaluate the forgetting performance of the models comprehensively.

\noindent\textbf{Metrics.} We will follow the work of~\citep{chundawat2023can, liu2024model, shen2024camu, fan2024salun, foster2024fast} to evaluate an algorithm's performance in forgetting tasks from four perspectives. 
\begin{itemize}
    \changed{M4.2}{\link{R2.4}}{\item Retain accuracy (\textbf{RA}): This metric calculates the model's accuracy on the remaining data set to assess how much the model still \textit{remembers} about the data that needs to be retained.
    \item Test accuracy (\textbf{TA}): This metric calculates the model's accuracy on the test data set. Similar to RA, this metric is used to assess how much information the model still \textit{remembers}, but it places more emphasis on how well the model maintains its generalization performance after unlearning.} 
    \item Forget accuracy (\textbf{FA}): This metric computes the model's accuracy on the forgetting data set to assess how much information the unlearned model has \textit{forgotten}.
    \item $\Delta_{\|\vw\|}$ or membership inference attack (\textbf{MIA}): Since models with different parameters may exhibit similar RA, TA, and FA, this metric aims to focus on the difference between the model itself and the gold standard $\vw_r$. For the linear model, we can directly compute their $\ell_2$-norm distance in parameter space as $\Delta_{\|\vw\|}\coloneqq\frac{1}{D}\|\vw-\vw_{r}\|_2^2$. However, this is challenging to measure directly for complex NN models, so the most prevalent approach is to use MIA to evaluate the entropy of the model's outputs. \changed{M4.3}{\link{R2.6}}{And in this paper, we employed a prediction-based MIA method, which determines the membership of a sample based on the model's prediction~\citep{yeom2018privacy,song2019privacy,song2021systematic}.} 
\end{itemize}
Note that $\Delta_{\|\vw\|}$ is the fundamental evaluation criterion for the linear model, while for the NN model, we will consider the average gap (\textbf{AvgGap}) of the unlearned model in RA, TA, FA, and MIA as the comprehensive evaluation criterion.

\noindent\textbf{Compared methods and parameters settings.} We selected four state-of-the-art (SOTA) methods for comparison with our MUSO. Two are similar to MUSO, both based on relabeling forgetting data. The other two emphasize enhancing MU effectiveness by selecting appropriate parameters.
\begin{itemize}
    \item Amnesiac~\citep{graves2021amnesiac}: Randomly setting the labels of forgetting data to other classes.
    \item BadTeacher~\citep{chundawat2023can}: Setting the labels of forgetting data to the output of a randomly initialized network.
    \item SalUn~\citep{fan2024salun}: Computing the weight saliency map to identify the most relevant weights for the forgetting data and subsequently conducts fine-tuning.
    \item SSD~\citep{foster2024fast}: Selecting parameters that are crucial for the forgetting data but less significant for the remaining data, without re-optimizing the model.
\end{itemize}
All experiments were repeated 5 times, and the implementation was carried out using Python on a docker container with an NVIDIA$^\circledR$ GeForce RTX$^\text{TM}$ 4090 GPU and a CPU with 40 cores and 256GB memory. 
Our MUSO's parameter tuning is consistent with Amnesiac and BadTeacher, with an additional sample ratio $R$. \changed{M4.4}{\link{R2.3}}{Across all experiments, we empirically recommend $R=0.2$, except for TinyImageNet-200, where we use $R=0.1$.} Additionally, we set a small value $\lambda = 10^{-6}$ to facilitate fast matrix inversion. The weight sparsity of SalUn is set within the range of [0, 1], and we select $\alpha\in[0.1, 100]$ and $\lambda\in[0.1,5]$ for SSD according to their settings in~\citep{fan2024salun,foster2024fast}. The source code is available in \url{https://github.com/Yruikk/MUSO}.

\subsection{Results for over-parameterized linear models.}\label{sec: explinear}
\changed{M4.5}{\link{R2.1} \\ \link{R2.3}}{Here, we evaluate the performance of different methods that modify forgetting data labels in over-parameterized linear models in MU to support our theoretical results in Theorem~\ref{theo: main}, and illustrate that randomly relabeling the forgetting data can sometimes lead to seemingly good performance in the output space (e.g., RA, TA, FA), but it may still significantly deviate from the implicit optimal or ``golden'' MU model when viewed from the parameter space.} As set in~\citep{liu2022double}, we selected 300 instances of the digits 3 and 7 from the MNIST data set, creating a data set of size $N=600$. We then choose $\{\vomega_i\}_{i=1}^{D=5000}$ random features such that $D \gg N$, placing the model in an over-parameterized regime, where $p(\vomega)$ is selected as multivariate Gaussian distribution $\mathcal{N}\left(0, \frac{1}{\sigma^2}\mI\right)$ and $\sigma=20$. We tested three unlearning scenarios as detailed in Section~\ref{sec: expsetting}. For the sub-class and random unlearning scenarios, we respectively selected $N_u=200$ samples to forget from digit 7 and randomly.

\begin{table*}[tb]
\caption{Comparison of MU performance with different forgetting data relabel methods in over-parameterized linear models. We report the mean of each metric, where $\Delta_{\|\vw\|}=\frac{1}{D}\|\vw-\vw_{r}\|_2^2$ is the most important, with detailed STD provided in Appendix~\ref{appx: exp}. The difference between the metrics and those of the $\vw_r$ model is marked in \textcolor{blue}{blue}, and the best performance is highlighted in \textbf{bold}.}
\label{tab:mnist_mean}
\centering
\resizebox{\linewidth}{!}{
\begin{tabular}{ccccccc}
\toprule[1.5pt]\vspace{-0.2em}
\multirow{2.2}{*}{\begin{tabular}[c]{@{}c@{}}Unlearning \\ scenario\end{tabular}} & \multirow{2.2}{*}{Metric} & \multicolumn{5}{c}{Methods} \\ \cmidrule[0.75pt]{3-7}  
 &  & \multicolumn{1}{c|}{Retrain} & \multicolumn{1}{c}{Pre-trained} & \multicolumn{1}{c}{Amnesiac} & \multicolumn{1}{c|}{BadTeacher} & \multicolumn{1}{c}{MUSO} \\ \midrule[1pt]
 \multicolumn{1}{c|}{\multirow{4}{*}{Full-class}} & \multicolumn{1}{c|}{RA} & \multicolumn{1}{c|}{100.00} & \multicolumn{1}{c}{100.00(\textcolor{blue}{0.00})} & \multicolumn{1}{c}{100.00(\textcolor{blue}{0.00})} & \multicolumn{1}{c|}{100.00(\textcolor{blue}{0.00})} & \multicolumn{1}{c}{\textbf{100.00(\textcolor{blue}{0.00})}} \\
 \multicolumn{1}{c|}{} & \multicolumn{1}{c|}{TA} & \multicolumn{1}{c|}{50.77} & \multicolumn{1}{c}{94.42(\textcolor{blue}{43.65})} & \multicolumn{1}{c}{56.00(\textcolor{blue}{5.23})} & \multicolumn{1}{c|}{56.38(\textcolor{blue}{5.61})} & \multicolumn{1}{c}{\textbf{50.87(\textcolor{blue}{0.10})}} \\
\multicolumn{1}{c|}{} & \multicolumn{1}{c|}{FA} & \multicolumn{1}{c|}{5.33} & \multicolumn{1}{c}{100.00(\textcolor{blue}{94.13})} & \multicolumn{1}{c}{6.53(\textcolor{blue}{1.20})} & \multicolumn{1}{c|}{5.83(\textcolor{blue}{0.50})} & \multicolumn{1}{c}{\textbf{5.39(\textcolor{blue}{0.06})}} \\
\multicolumn{1}{c|}{} & \multicolumn{1}{c|}{$\Delta_{\|\vw\|}$} & \multicolumn{1}{c|}{$-$} & \multicolumn{1}{c}{\textcolor{blue}{0.1027}} & \multicolumn{1}{c}{\textcolor{blue}{0.1037}} & \multicolumn{1}{c|}{\textcolor{blue}{0.1039}} & \multicolumn{1}{c}{\textbf{\textcolor{blue}{0.0001}}} \\ \midrule
\multicolumn{1}{c|}{\multirow{4}{*}{Sub-class}} & \multicolumn{1}{c|}{RA} & \multicolumn{1}{c|}{100.00} & \multicolumn{1}{c}{100.00(\textcolor{blue}{0.00})} & \multicolumn{1}{c}{100.00(\textcolor{blue}{0.00})} & \multicolumn{1}{c|}{100.00(\textcolor{blue}{0.00})} & \multicolumn{1}{c}{\textbf{100.00(\textcolor{blue}{0.00})}} \\
\multicolumn{1}{c|}{} & \multicolumn{1}{c|}{TA} & \multicolumn{1}{c|}{92.10} & \multicolumn{1}{c}{94.58(\textcolor{blue}{2.48})} & \multicolumn{1}{c}{84.63(\textcolor{blue}{7.47})} & \multicolumn{1}{c|}{83.25(\textcolor{blue}{8.85})} & \multicolumn{1}{c}{\textbf{92.05(\textcolor{blue}{0.05})}} \\
\multicolumn{1}{c|}{} & \multicolumn{1}{c|}{FA} & \multicolumn{1}{c|}{87.60} & \multicolumn{1}{c}{100.00(\textcolor{blue}{12.40})} & \multicolumn{1}{c}{88.20(\textcolor{blue}{0.60})} & \multicolumn{1}{c|}{\textbf{87.60(\textcolor{blue}{0.00})}} & \multicolumn{1}{c}{87.30(\textcolor{blue}{0.30})} \\
\multicolumn{1}{c|}{} & \multicolumn{1}{c|}{$\Delta_{\|\vw\|}$} & \multicolumn{1}{c|}{$-$} & \multicolumn{1}{c}{\textcolor{blue}{0.0554}} & \multicolumn{1}{c}{\textcolor{blue}{0.0566}} & \multicolumn{1}{c|}{\textcolor{blue}{0.0577}} & \multicolumn{1}{c}{\textbf{\textcolor{blue}{0.0001}}} \\ \midrule
\multicolumn{1}{c|}{\multirow{4}{*}{Random}} & \multicolumn{1}{c|}{RA} & \multicolumn{1}{c|}{100.00} & \multicolumn{1}{c}{100.00(\textcolor{blue}{0.00})} & \multicolumn{1}{c}{96.80(\textcolor{blue}{3.20})} & \multicolumn{1}{c|}{95.00(\textcolor{blue}{5.00})} & \multicolumn{1}{c}{\textbf{100.00(\textcolor{blue}{0.00})}} \\
\multicolumn{1}{c|}{} & \multicolumn{1}{c|}{TA} & \multicolumn{1}{c|}{92.63} & \multicolumn{1}{c}{94.06(\textcolor{blue}{1.43})} & \multicolumn{1}{c}{84.75(\textcolor{blue}{7.88})} & \multicolumn{1}{c|}{80.97(\textcolor{blue}{11.66})} & \multicolumn{1}{c}{\textbf{92.74(\textcolor{blue}{0.11})}} \\
\multicolumn{1}{c|}{} & \multicolumn{1}{c|}{FA} & \multicolumn{1}{c|}{93.90} & \multicolumn{1}{c}{100.00(\textcolor{blue}{6.10})} & \multicolumn{1}{c}{94.20(\textcolor{blue}{0.30})} & \multicolumn{1}{c|}{93.40(\textcolor{blue}{0.50})} & \multicolumn{1}{c}{\textbf{93.90(\textcolor{blue}{0.00})}} \\
\multicolumn{1}{c|}{} & \multicolumn{1}{c|}{$\Delta_{\|\vw\|}$} & \multicolumn{1}{c|}{$-$} & \multicolumn{1}{c}{\textcolor{blue}{0.0485}} & \multicolumn{1}{c}{\textcolor{blue}{0.0501}} & \multicolumn{1}{c|}{\textcolor{blue}{0.0506}} & \multicolumn{1}{c}{\textbf{\textcolor{blue}{0.0002}}} \\ \bottomrule[1.5pt]
\end{tabular}
}
\end{table*}

\begin{table*}[t]
\caption{Comparison of MU performance with four state-of-the-art MU algorithms in ResNet18 with CIFAR-20/100 data set. We report the mean of each metric, with detailed standard deviation provided in Appendix~\ref{appx: exp}. The difference between the metrics and those of the $\vw_r$ model is highlighted in \textcolor{blue}{blue}, with the optimal performance emphasized in \textbf{bold}.}
\label{tab:NN_mean}
\resizebox{\linewidth}{!}{
\begin{tabular}{ccrrrrrr}
\toprule[1.5pt]\vspace{-0.2em}
\multirow{2.2}{*}{\begin{tabular}[c]{@{}c@{}}Unlearning\\ scenario\end{tabular}} & \multirow{2.2}{*}{Metric} & \multicolumn{6}{c}{Methods}{} \\ \cmidrule[0.75pt]{3-8} 
 &  & \multicolumn{1}{c|}{Retrain} & \multicolumn{1}{c}{Amnesiac} & \multicolumn{1}{c}{BadTeacher} & \multicolumn{1}{c}{SalUn} & \multicolumn{1}{c|}{SSD} & \multicolumn{1}{c}{MUSO} \\ \midrule[1pt]
\multicolumn{1}{c|}{\multirow{5}{*}{\begin{tabular}[c]{@{}c@{}}Full-class\\ (Rocket)\end{tabular}}} & \multicolumn{1}{c|}{RA} & \multicolumn{1}{c|}{\text{99.96}} & \multicolumn{1}{c}{99.97(\textcolor{blue}{0.01})} & \multicolumn{1}{c}{99.93(\textcolor{blue}{0.03})} & \multicolumn{1}{c}{99.97(\textcolor{blue}{0.01})} & \multicolumn{1}{c|}{99.08(\textcolor{blue}{0.88})} & \multicolumn{1}{c}{\textbf{99.97(\textcolor{blue}{0.01})}} 
\\
\multicolumn{1}{c|}{} & \multicolumn{1}{c|}{TA} & \multicolumn{1}{c|}{\text{76.26}} & \multicolumn{1}{c}{76.72(\textcolor{blue}{0.46})} & \multicolumn{1}{c}{76.69(\textcolor{blue}{0.43})} & \multicolumn{1}{c}{76.66(\textcolor{blue}{0.40})} & \multicolumn{1}{c|}{76.52(\textcolor{blue}{0.26})} & \multicolumn{1}{c}{\textbf{76.23(\textcolor{blue}{0.02})}} 
\\
\multicolumn{1}{c|}{} & \multicolumn{1}{c|}{FA} & \multicolumn{1}{c|}{0.00} & \multicolumn{1}{c}{0.00(\textcolor{blue}{0.00})} & \multicolumn{1}{c}{1.20(\textcolor{blue}{1.20})} & \multicolumn{1}{c}{0.00(\textcolor{blue}{0.00})} & \multicolumn{1}{c|}{0.00(\textcolor{blue}{0.00})} & \multicolumn{1}{c}{\textbf{0.00(\textcolor{blue}{0.00})}} 
\\
\multicolumn{1}{c|}{} & \multicolumn{1}{c|}{MIA} & \multicolumn{1}{c|}{15.96} & \multicolumn{1}{c}{9.08(\textcolor{blue}{6.88})} & \multicolumn{1}{c}{0.00(\textcolor{blue}{15.96})} & \multicolumn{1}{c}{5.28(\textcolor{blue}{10.68})} & \multicolumn{1}{c|}{1.36(\textcolor{blue}{14.60})} & \multicolumn{1}{c}{\textbf{20.24(\textcolor{blue}{4.28})}} 
\\
\multicolumn{1}{c|}{} & \multicolumn{1}{c|}{AvgGap} & \multicolumn{1}{c|}{$-$}  & \multicolumn{1}{c}{\textcolor{blue}{1.84}} & \multicolumn{1}{c}{\textcolor{blue}{4.41}} & \multicolumn{1}{c}{\textcolor{blue}{2.77}} & \multicolumn{1}{c|}{\textcolor{blue}{3.94}} & \multicolumn{1}{c}{\textcolor{blue}{\textbf{1.08}}} 
\\ \midrule
\multicolumn{1}{c|}{\multirow{5}{*}{\begin{tabular}[c]{@{}c@{}}Full-class\\ (Sea)\end{tabular}}} & \multicolumn{1}{c|}{RA} & \multicolumn{1}{c|}{\text{99.94}} & \multicolumn{1}{c}{99.97(\textcolor{blue}{0.03})} & \multicolumn{1}{c}{\textbf{99.94(\textcolor{blue}{0.00})}} & \multicolumn{1}{c}{99.97(\textcolor{blue}{0.03})} & \multicolumn{1}{c|}{98.91(\textcolor{blue}{1.03})} & \multicolumn{1}{c}{99.96(\textcolor{blue}{0.02})} \\
\multicolumn{1}{c|}{} & \multicolumn{1}{c|}{TA} & \multicolumn{1}{c|}{\text{76.53}} & \multicolumn{1}{c}{76.17(\textcolor{blue}{0.36})} & \multicolumn{1}{c}{77.65(\textcolor{blue}{1.12})} & \multicolumn{1}{c}{\textbf{76.46(\textcolor{blue}{0.07})}} & \multicolumn{1}{c|}{74.54(\textcolor{blue}{1.99})} & \multicolumn{1}{c}{76.44(\textcolor{blue}{0.09})} \\
\multicolumn{1}{c|}{} & \multicolumn{1}{c|}{FA} & \multicolumn{1}{c|}{0.00} & \multicolumn{1}{c}{0.00(\textcolor{blue}{0.00})} & \multicolumn{1}{c}{32.20(\textcolor{blue}{32.20})} & \multicolumn{1}{c}{0.00(\textcolor{blue}{0.00})} & \multicolumn{1}{c|}{0.00(\textcolor{blue}{0.00})} & \multicolumn{1}{c}{\textbf{0.00(\textcolor{blue}{0.00})}} 
\\
\multicolumn{1}{c|}{} & \multicolumn{1}{c|}{MIA} & \multicolumn{1}{c|}{25.64} & \multicolumn{1}{c}{5.88(\textcolor{blue}{19.76})} & \multicolumn{1}{c}{0.00(\textcolor{blue}{25.64})} & \multicolumn{1}{c}{5.88(\textcolor{blue}{19.76})} & \multicolumn{1}{c|}{1.64(\textcolor{blue}{24.00})} & \multicolumn{1}{c}{\textbf{24.45(\textcolor{blue}{1.19})}} 
\\
\multicolumn{1}{c|}{} & \multicolumn{1}{c|}{AvgGap} & \multicolumn{1}{c|}{$-$} & \multicolumn{1}{c}{\textcolor{blue}{5.04}} & \multicolumn{1}{c}{\textcolor{blue}{14.74}} & \multicolumn{1}{c}{\textcolor{blue}{4.96}} & \multicolumn{1}{c|}{\textcolor{blue}{6.76}} & \multicolumn{1}{c}{\textcolor{blue}{\textbf{0.33}}} 
\\ \midrule
\multicolumn{1}{c|}{\multirow{5}{*}{\begin{tabular}[c]{@{}c@{}}Full-class\\ (Cattle)\end{tabular}}} & \multicolumn{1}{c|}{RA} & \multicolumn{1}{c|}{\text{99.93}} & \multicolumn{1}{c}{99.98(\textcolor{blue}{0.05})} & \multicolumn{1}{c}{\textbf{99.94(\textcolor{blue}{0.01})}} & \multicolumn{1}{c}{99.97(\textcolor{blue}{0.04})} & \multicolumn{1}{c|}{99.07(\textcolor{blue}{0.86})} & \multicolumn{1}{c}{99.97(\textcolor{blue}{0.04})} \\
\multicolumn{1}{c|}{} & \multicolumn{1}{c|}{TA} & \multicolumn{1}{c|}{\text{76.69}} & \multicolumn{1}{c}{76.97(\textcolor{blue}{0.28})} & \multicolumn{1}{c}{77.70(\textcolor{blue}{1.01})} & \multicolumn{1}{c}{76.60(\textcolor{blue}{0.09})} & \multicolumn{1}{c|}{76.63(\textcolor{blue}{0.06})} & \multicolumn{1}{c}{\textbf{76.68(\textcolor{blue}{0.01})}} \\
\multicolumn{1}{c|}{} & \multicolumn{1}{c|}{FA} & \multicolumn{1}{c|}{0.00} & \multicolumn{1}{c}{0.00(\textcolor{blue}{0.00})} & \multicolumn{1}{c}{6.80(\textcolor{blue}{6.80})} & \multicolumn{1}{c}{0.00(\textcolor{blue}{0.00})} & \multicolumn{1}{c|}{0.00(\textcolor{blue}{0.00})} & \multicolumn{1}{c}{\textbf{0.00(\textcolor{blue}{0.00})}} 
\\
\multicolumn{1}{c|}{} & \multicolumn{1}{c|}{MIA} & \multicolumn{1}{c|}{11.48} & \multicolumn{1}{c}{10.20(\textcolor{blue}{1.28})} & \multicolumn{1}{c}{0.00(\textcolor{blue}{11.48})} & \multicolumn{1}{c}{\textbf{10.60(\textcolor{blue}{0.88})}} & \multicolumn{1}{c|}{2.48(\textcolor{blue}{9.00})} & \multicolumn{1}{c}{8.67(\textcolor{blue}{2.81})} 
\\
\multicolumn{1}{c|}{} & \multicolumn{1}{c|}{AvgGap} & \multicolumn{1}{c|}{$-$} & \multicolumn{1}{c}{\textcolor{blue}{0.40}} & \multicolumn{1}{c}{\textcolor{blue}{4.82}} & \multicolumn{1}{c}{\textcolor{blue}{\textbf{0.25}}} & \multicolumn{1}{c|}{\textcolor{blue}{2.48}} & \multicolumn{1}{c}{\textcolor{blue}{0.72}} 
\\ \midrule[1.5pt]

\multicolumn{1}{c|}{\multirow{5}{*}{\begin{tabular}[c]{@{}c@{}}Sub-class\\ (Rocket)\end{tabular}}} & \multicolumn{1}{c|}{RA} & \multicolumn{1}{c|}{\text{99.96}} & \multicolumn{1}{c}{\textbf{99.97(\textcolor{blue}{0.01})}} & \multicolumn{1}{c}{99.91(\textcolor{blue}{0.05})} & \multicolumn{1}{c}{99.93(\textcolor{blue}{0.03})} & \multicolumn{1}{c|}{99.29(\textcolor{blue}{0.68})} & \multicolumn{1}{c}{99.99(\textcolor{blue}{0.03})} 
\\
\multicolumn{1}{c|}{} & \multicolumn{1}{c|}{TA} & \multicolumn{1}{c|}{\text{84.95}} & \multicolumn{1}{c}{84.55(\textcolor{blue}{0.40})} & \multicolumn{1}{c}{\textbf{84.99(\textcolor{blue}{0.04})}} & \multicolumn{1}{c}{84.58(\textcolor{blue}{0.37})} & \multicolumn{1}{c|}{84.74(\textcolor{blue}{0.21})} & \multicolumn{1}{c}{84.46(\textcolor{blue}{0.49})} 
\\
\multicolumn{1}{c|}{} & \multicolumn{1}{c|}{FA} & \multicolumn{1}{c|}{2.60} & \multicolumn{1}{c}{\textbf{2.60(\textcolor{blue}{0.00})}} & \multicolumn{1}{c}{6.20(\textcolor{blue}{3.60})} & \multicolumn{1}{c}{3.20(\textcolor{blue}{0.60})} & \multicolumn{1}{c|}{4.00(\textcolor{blue}{1.40})} & \multicolumn{1}{c}{1.00(\textcolor{blue}{1.60})} 
\\
\multicolumn{1}{c|}{} & \multicolumn{1}{c|}{MIA} & \multicolumn{1}{c|}{21.36} & \multicolumn{1}{c}{0.00(\textcolor{blue}{21.36})} & \multicolumn{1}{c}{0.00(\textcolor{blue}{21.36})} & \multicolumn{1}{c}{0.00(\textcolor{blue}{21.36})} & \multicolumn{1}{c|}{5.28(\textcolor{blue}{16.08})} & \multicolumn{1}{c}{\textbf{20.84(\textcolor{blue}{0.52})}} 
\\
\multicolumn{1}{c|}{} & \multicolumn{1}{c|}{AvgGap} & \multicolumn{1}{c|}{$-$} & \multicolumn{1}{c}{\textcolor{blue}{5.44}} & \multicolumn{1}{c}{\textcolor{blue}{6.26}} & \multicolumn{1}{c}{\textcolor{blue}{5.59}} & \multicolumn{1}{c|}{\textcolor{blue}{4.59}} & \multicolumn{1}{c}{\textcolor{blue}{\textbf{0.66}}} 
\\ \midrule
\multicolumn{1}{c|}{\multirow{5}{*}{\begin{tabular}[c]{@{}c@{}}Sub-class\\ (Sea)\end{tabular}}} & \multicolumn{1}{c|}{RA} & \multicolumn{1}{c|}{\text{99.93}} & \multicolumn{1}{c}{\textbf{99.97(\textcolor{blue}{0.04})}} & \multicolumn{1}{c}{99.37(\textcolor{blue}{0.55})} & \multicolumn{1}{c}{99.81(\textcolor{blue}{0.12})} & \multicolumn{1}{c|}{98.17(\textcolor{blue}{1.76})} & \multicolumn{1}{c}{99.60(\textcolor{blue}{0.33})} 
\\
\multicolumn{1}{c|}{} & \multicolumn{1}{c|}{TA} & \multicolumn{1}{c|}{\text{84.74}} & \multicolumn{1}{c}{86.37(\textcolor{blue}{1.63})} & \multicolumn{1}{c}{\textbf{84.82(\textcolor{blue}{0.08})}} & \multicolumn{1}{c}{84.46(\textcolor{blue}{0.28})} & \multicolumn{1}{c|}{84.51(\textcolor{blue}{0.24})} & \multicolumn{1}{c}{82.62(\textcolor{blue}{2.12})} 
\\
\multicolumn{1}{c|}{} & \multicolumn{1}{c|}{FA} & \multicolumn{1}{c|}{82.40} & \multicolumn{1}{c}{78.00(\textcolor{blue}{4.40})} & \multicolumn{1}{c}{\textbf{78.00(\textcolor{blue}{4.40})}} & \multicolumn{1}{c}{79.20(\textcolor{blue}{3.20})} & \multicolumn{1}{c|}{64.60(\textcolor{blue}{17.80})} & \multicolumn{1}{c}{88.80(\textcolor{blue}{6.40})} 
\\
\multicolumn{1}{c|}{} & \multicolumn{1}{c|}{MIA} & \multicolumn{1}{c|}{59.08} & \multicolumn{1}{c}{0.04(\textcolor{blue}{59.04})} & \multicolumn{1}{c}{0.16(\textcolor{blue}{58.92})} & \multicolumn{1}{c}{0.00(\textcolor{blue}{59.08})} & \multicolumn{1}{c|}{7.48(\textcolor{blue}{51.60})} & \multicolumn{1}{c}{\textbf{55.88(\textcolor{blue}{3.20})}} 
\\
\multicolumn{1}{c|}{} & \multicolumn{1}{c|}{AvgGap} & \multicolumn{1}{c|}{$-$} & \multicolumn{1}{c}{\textcolor{blue}{16.28}} & \multicolumn{1}{c}{\textcolor{blue}{15.99}} & \multicolumn{1}{c}{\textcolor{blue}{15.76}} & \multicolumn{1}{c|}{\textcolor{blue}{17.85}} & \multicolumn{1}{c}{\textcolor{blue}{\textbf{3.01}}} 
\\ \midrule
\multicolumn{1}{c|}{\multirow{5}{*}{\begin{tabular}[c]{@{}c@{}}Sub-class\\ (Cattle)\end{tabular}}} & \multicolumn{1}{c|}{RA} & \multicolumn{1}{c|}{\text{99.93}} & \multicolumn{1}{c}{99.97(\textcolor{blue}{0.04})} & \multicolumn{1}{c}{99.39(\textcolor{blue}{0.54})} & \multicolumn{1}{c}{\textbf{99.93(\textcolor{blue}{0.00})}} & \multicolumn{1}{c|}{99.06(\textcolor{blue}{0.87})} & \multicolumn{1}{c}{99.92(\textcolor{blue}{0.01})} 
\\
\multicolumn{1}{c|}{} & \multicolumn{1}{c|}{TA} & \multicolumn{1}{c|}{\text{84.83}} & \multicolumn{1}{c}{84.54(\textcolor{blue}{0.29})} & \multicolumn{1}{c}{85.01(\textcolor{blue}{0.18})} & \multicolumn{1}{c}{\textbf{84.72(\textcolor{blue}{0.11})}} & \multicolumn{1}{c|}{83.51(\textcolor{blue}{1.32})} & \multicolumn{1}{c}{83.95(\textcolor{blue}{0.88})} 
\\
\multicolumn{1}{c|}{} & \multicolumn{1}{c|}{FA} & \multicolumn{1}{c|}{47.80} & \multicolumn{1}{c}{38.40(\textcolor{blue}{9.40})} & \multicolumn{1}{c}{42.80(\textcolor{blue}{5.00})} & \multicolumn{1}{c}{33.80(\textcolor{blue}{14.00})} & \multicolumn{1}{c|}{16.40(\textcolor{blue}{31.40})} & \multicolumn{1}{c}{\textbf{52.40(\textcolor{blue}{4.6})}} 
\\
\multicolumn{1}{c|}{} & \multicolumn{1}{c|}{MIA} & \multicolumn{1}{c|}{26.64} & \multicolumn{1}{c}{0.00(\textcolor{blue}{26.64})} & \multicolumn{1}{c}{0.00(\textcolor{blue}{26.64})} & \multicolumn{1}{c}{0.12(\textcolor{blue}{26.52})} & \multicolumn{1}{c|}{8.80(\textcolor{blue}{17.84})} & \multicolumn{1}{c}{\textbf{25.14(\textcolor{blue}{1.50})}} 
\\
\multicolumn{1}{c|}{} & \multicolumn{1}{c|}{AvgGap} & \multicolumn{1}{c|}{$-$} & \multicolumn{1}{c}{\textcolor{blue}{9.09}} & \multicolumn{1}{c}{\textcolor{blue}{8.09}} & \multicolumn{1}{c}{\textcolor{blue}{10.16}} & \multicolumn{1}{c|}{\textcolor{blue}{12.86}} & \multicolumn{1}{c}{\textcolor{blue}{\textbf{1.75}}}
\\ \midrule[1.5pt]

\multicolumn{1}{c|}{\multirow{5}{*}{\begin{tabular}[c]{@{}c@{}}Random\\ (1\%)\end{tabular}}} & \multicolumn{1}{c|}{RA} & \multicolumn{1}{c|}{\text{99.99}} & \multicolumn{1}{c}{99.92(\textcolor{blue}{0.07})} & \multicolumn{1}{c}{99.88(\textcolor{blue}{0.11})} & \multicolumn{1}{c}{\textbf{99.93(\textcolor{blue}{0.06})}} & \multicolumn{1}{c|}{98.16(\textcolor{blue}{1.83})} & \multicolumn{1}{c}{99.93(\textcolor{blue}{0.06})} 
\\
\multicolumn{1}{c|}{} & \multicolumn{1}{c|}{TA} & \multicolumn{1}{c|}{\text{76.41}} & \multicolumn{1}{c}{76.00(\textcolor{blue}{0.41})} & \multicolumn{1}{c}{75.77(\textcolor{blue}{0.64})} & \multicolumn{1}{c}{\textbf{76.16(\textcolor{blue}{0.25})}} & \multicolumn{1}{c|}{72.48(\textcolor{blue}{3.93})} & \multicolumn{1}{c}{75.93(\textcolor{blue}{0.47})} 
\\
\multicolumn{1}{c|}{} & \multicolumn{1}{c|}{FA} & \multicolumn{1}{c|}{75.28} & \multicolumn{1}{c}{\textbf{75.68(\textcolor{blue}{0.40})}} & \multicolumn{1}{c}{76.60(\textcolor{blue}{1.32})} & \multicolumn{1}{c}{82.16(\textcolor{blue}{6.88})} & \multicolumn{1}{c|}{93.40(\textcolor{blue}{18.12})} & \multicolumn{1}{c}{76.20(\textcolor{blue}{0.92})} 
\\
\multicolumn{1}{c|}{} & \multicolumn{1}{c|}{MIA} & \multicolumn{1}{c|}{55.44} & \multicolumn{1}{c}{10.28(\textcolor{blue}{45.16})} & \multicolumn{1}{c}{10.80(\textcolor{blue}{44.64})} & \multicolumn{1}{c}{18.44(\textcolor{blue}{37.00})} & \multicolumn{1}{c|}{\textbf{79.16(\textcolor{blue}{23.72})}} & \multicolumn{1}{c}{19.72(\textcolor{blue}{35.72})} 
\\
\multicolumn{1}{c|}{} & \multicolumn{1}{c|}{AvgGap} & \multicolumn{1}{c|}{$-$} & \multicolumn{1}{c}{\textcolor{blue}{11.51}} & \multicolumn{1}{c}{\textcolor{blue}{11.68}} & \multicolumn{1}{c}{\textcolor{blue}{11.05}} & \multicolumn{1}{c|}{\textcolor{blue}{11.90}} & \multicolumn{1}{c}{\textcolor{blue}{\textbf{9.29}}} 
\\ \midrule
\multicolumn{1}{c|}{\multirow{5}{*}{\begin{tabular}[c]{@{}c@{}}Random\\ (10\%)\end{tabular}}} & \multicolumn{1}{c|}{RA} & \multicolumn{1}{c|}{\text{99.99}} & \multicolumn{1}{c}{\textbf{99.89(\textcolor{blue}{0.10})}} & \multicolumn{1}{c}{99.86(\textcolor{blue}{0.13})} & \multicolumn{1}{c}{99.83(\textcolor{blue}{0.17})} & \multicolumn{1}{c|}{96.04(\textcolor{blue}{3.95})} & \multicolumn{1}{c}{99.85(\textcolor{blue}{0.14})} 
\\
\multicolumn{1}{c|}{} & \multicolumn{1}{c|}{TA} & \multicolumn{1}{c|}{\text{75.44}} & \multicolumn{1}{c}{72.70(\textcolor{blue}{2.74})} & \multicolumn{1}{c}{\textbf{74.44(\textcolor{blue}{1.00})}} & \multicolumn{1}{c}{72.28(\textcolor{blue}{3.16})} & \multicolumn{1}{c|}{72.59(\textcolor{blue}{2.84})} & \multicolumn{1}{c}{70.04(\textcolor{blue}{5.39})} 
\\
\multicolumn{1}{c|}{} & \multicolumn{1}{c|}{FA} & \multicolumn{1}{c|}{74.68} & \multicolumn{1}{c}{75.95(\textcolor{blue}{1.27})} & \multicolumn{1}{c}{\textbf{74.86(\textcolor{blue}{0.18})}} & \multicolumn{1}{c}{76.92(\textcolor{blue}{2.24})} & \multicolumn{1}{c|}{92.88(\textcolor{blue}{18.20})} & \multicolumn{1}{c}{73.09(\textcolor{blue}{1.59})} 
\\
\multicolumn{1}{c|}{} & \multicolumn{1}{c|}{MIA} & \multicolumn{1}{c|}{54.59} & \multicolumn{1}{c}{4.02(\textcolor{blue}{50.57})} & \multicolumn{1}{c}{0.08(\textcolor{blue}{54.51})} & \multicolumn{1}{c}{2.53(\textcolor{blue}{52.06})} & \multicolumn{1}{c|}{\textbf{81.41(\textcolor{blue}{26.82})}} & \multicolumn{1}{c}{18.32(\textcolor{blue}{36.27})} 
\\
\multicolumn{1}{c|}{} & \multicolumn{1}{c|}{AvgGap} & \multicolumn{1}{c|}{$-$} & \multicolumn{1}{c}{\textcolor{blue}{13.67}} & \multicolumn{1}{c}{\textcolor{blue}{13.95}} & \multicolumn{1}{c}{\textcolor{blue}{14.41}} & \multicolumn{1}{c|}{\textcolor{blue}{12.95}} & \multicolumn{1}{c}{\textcolor{blue}{\textbf{10.85}}}
\\ \bottomrule[1.5pt]
\end{tabular}
}
\end{table*}

In this experiment, we compared all the relabeling methods including MUSO, and reported the MU results in Table~\ref{tab:mnist_mean}. \changed{M4.6}{\link{R2.7}}{For Amnesiac and BadTeacher, we applied early stopping based on FA to mitigate the risk of forgetting. Additionally, the randomness of label settings can occasionally result in unusually long epochs during multiple experiments, causing the model to fail to unlearn effectively. We excluded these extreme cases and reported the best results.} However, these two methods that focus on guiding the model to learn incorrect information still fall short in terms of TA and FA compared to MUSO, which meticulously integrates information from both remaining and forgetting data into labels to steer the model's learning process. Moreover, the $\Delta_{\|\vw\|}$ metric, which measures how closely the updated model aligns with the retrained model, further highlights the differences between these methods. Observing that the $\Delta_{\|\vw\|}$ of MUSO has nearly dropped to zero indicates that in all the scenarios, MUSO has achieved exact MU. This demonstrates that MUSO successfully aligns the updated model parameters with those of the retrained model, avoiding unnecessary deviation. In contrast, the $\Delta_{\|\vw\|}$ of Amnesiac and BadTeacher are higher than those of the pre-trained model. Roughly speaking, this indicates that the optimization path of the model is moving in the opposite direction to the retrained model. 
Such behavior confirms that the erroneous or heuristically generated label information used by these methods cannot effectively assist the network in achieving good MU.

\subsection{Results for NN models.}\label{sec: expNN}
\changed{M4.7}{\link{R2.1}}{In this section, we take a practical perspective, using various metrics to compare our MUSO with four SOTA methods in the output space, demonstrating its excellent performance when extended to real-world non-linear NN models. First, we conduct tests with the ResNet18 model, where we use the CIFAR-100 dataset for full-class and random unlearning scenarios and the CIFAR-20 dataset for sub-class unlearning. Then, we extend similar tests to more diverse datasets and network architectures. Specifically, we validate the performance of different MU methods under random unlearning scenarios on the VGG16-BN model with the CIFAR-10 dataset and the ResNet34 model with the TinyImageNet-200 dataset.}

In the non-linear setting, directly analyzing differences in model parameters becomes challenging due to the inherent complexity of NN models. Instead, we adopt the widely used MIA as an alternative evaluation metric.
The results with the ResNet18 for the full-class unlearning are reported at the top of Table~\ref{tab:NN_mean}, where all methods exhibit similar performance in terms of TA and FA, except for BadTeacher. However, concerning MIA, our approach significantly outperforms other methods in forgetting \textit{Rocket} and \textit{Sea} classes and only slightly lags behind SalUn and Amnesiac for the \textit{Cattle} class. The middle section of Table~\ref{tab:NN_mean} displays the results of sub-class unlearning. In this configuration, BadTeacher's performance is significantly improved compared to before, but MUSO still shows a noticeable improvement over other methods in terms of AvgGap. This is primarily attributed to the advantage of our approach in MIA, a strength that is also evident in full-class unlearning. Finally, in random unlearning, the changes between the pre-trained model and the retrained model are relatively minor, with most methods performing similarly except for SSD. Nevertheless, MUSO still demonstrates a slight advantage in the AvgGap metric for 1\% random unlearning. In the context of a higher 10\% unlearning scenario, MUSO clearly outperforms the preceding three algorithms, slightly ahead of SSD. Despite SSD achieving commendable TA and MIA metrics, its unlearning effectiveness falls short in this scenario. Our approach balances model generalization and forgetting specific data, highlighting its practicality and effectiveness.

\changed{M4.8}{\link{R2.4}}{Additionally, we tested more datasets and model frameworks under random unlearning scenarios, with the results reported in Tables~\ref{tab:VGG_mean} and Table~\ref{tab:Tiny_mean}. In the first setting, our method achieved excellent results across all four metrics but ranked second in terms of AvgGap, slightly behind the SSD algorithm. It is worth noting that SSD's strong performance was mainly due to its MIA metric being very close to the retrained model, while its RA and TA metrics showed significant deviations. In the second setting, our method maintained close alignment with the retrained model in RA, TA, and FA, while achieving a smaller MIA gap than other methods, thereby outperforming them in overall performance.}

\begin{table*}[t]
\caption{Comparison of MU performance with four state-of-the-art MU algorithms in VGG16-BN with CIFAR-10 data set. We report the mean of each metric, with detailed standard deviation provided in Appendix~\ref{appx: exp}. The difference between the metrics and those of the $\vw_r$ model is highlighted in \textcolor{blue}{blue}, with the optimal performance emphasized in \textbf{bold}.}
\label{tab:VGG_mean}
\resizebox{\linewidth}{!}{
\begin{tabular}{ccrrrrrr}
\toprule[1.5pt]\vspace{-0.2em}
\multirow{2.2}{*}{\begin{tabular}[c]{@{}c@{}}Unlearning\\ scenario\end{tabular}} & \multirow{2.2}{*}{Metric} & \multicolumn{6}{c}{Methods}{} \\ \cmidrule[0.75pt]{3-8} 
 &  & \multicolumn{1}{c|}{Retrain} & \multicolumn{1}{c}{Amnesiac} & \multicolumn{1}{c}{BadTeacher} & \multicolumn{1}{c}{SalUn} & \multicolumn{1}{c|}{SSD} & \multicolumn{1}{c}{MUSO} \\ \midrule[1pt]

\multicolumn{1}{c|}{\multirow{5}{*}{\begin{tabular}[c]{@{}c@{}}Random\\ (1\%)\end{tabular}}} & \multicolumn{1}{c|}{RA} & \multicolumn{1}{c|}{\text{99.97}} & \multicolumn{1}{c}{100.00(\textcolor{blue}{0.03})} & \multicolumn{1}{c}{99.95(\textcolor{blue}{0.02})} & \multicolumn{1}{c}{100.00(\textcolor{blue}{0.03})} & \multicolumn{1}{c|}{93.37(\textcolor{blue}{6.60})} & \multicolumn{1}{c}{\textbf{99.99(\textcolor{blue}{0.02})}} 
\\
\multicolumn{1}{c|}{} & \multicolumn{1}{c|}{TA} & \multicolumn{1}{c|}{\text{93.38}} & \multicolumn{1}{c}{93.00(\textcolor{blue}{0.38})} & \multicolumn{1}{c}{\textbf{93.12(\textcolor{blue}{0.26})}} & \multicolumn{1}{c}{92.71(\textcolor{blue}{0.67})} & \multicolumn{1}{c|}{84.05(\textcolor{blue}{9.33})} & \multicolumn{1}{c}{92.73(\textcolor{blue}{0.65})} 
\\
\multicolumn{1}{c|}{} & \multicolumn{1}{c|}{FA} & \multicolumn{1}{c|}{92.50} & \multicolumn{1}{c}{89.96(\textcolor{blue}{2.54})} & \multicolumn{1}{c}{92.36(\textcolor{blue}{0.14})} & \multicolumn{1}{c}{92.13(\textcolor{blue}{0.37})} & \multicolumn{1}{c|}{\textbf{92.60(\textcolor{blue}{0.10})}} & \multicolumn{1}{c}{92.20(\textcolor{blue}{0.30})} 
\\
\multicolumn{1}{c|}{} & \multicolumn{1}{c|}{MIA} & \multicolumn{1}{c|}{79.55} & \multicolumn{1}{c}{41.76(\textcolor{blue}{37.79})} & \multicolumn{1}{c}{52.80(\textcolor{blue}{26.75})} & \multicolumn{1}{c}{47.15(\textcolor{blue}{32.40})} & \multicolumn{1}{c|}{\textbf{77.50(\textcolor{blue}{2.05})}} & \multicolumn{1}{c}{55.00(\textcolor{blue}{24.55})} 
\\
\multicolumn{1}{c|}{} & \multicolumn{1}{c|}{AvgGap} & \multicolumn{1}{c|}{$-$} & \multicolumn{1}{c}{\textcolor{blue}{10.18}} & \multicolumn{1}{c}{\textcolor{blue}{6.79}} & \multicolumn{1}{c}{\textcolor{blue}{8.36}} & \multicolumn{1}{c|}{\textcolor{blue}{\textbf{4.52}}} & \multicolumn{1}{c}{\textcolor{blue}{6.38}} 
\\ \midrule

\multicolumn{1}{c|}{\multirow{5}{*}{\begin{tabular}[c]{@{}c@{}}Random\\ (10\%)\end{tabular}}} & \multicolumn{1}{c|}{RA} & \multicolumn{1}{c|}{\text{99.96}} & \multicolumn{1}{c}{100.00(\textcolor{blue}{0.04})} & \multicolumn{1}{c}{\textbf{99.97(\textcolor{blue}{0.01})}} & \multicolumn{1}{c}{99.99(\textcolor{blue}{0.03})} & \multicolumn{1}{c|}{92.83(\textcolor{blue}{7.13})} & \multicolumn{1}{c}{99.98(\textcolor{blue}{0.02})} 
\\
\multicolumn{1}{c|}{} & \multicolumn{1}{c|}{TA} & \multicolumn{1}{c|}{\text{92.89}} & \multicolumn{1}{c}{\textbf{92.90(\textcolor{blue}{0.01})}} & \multicolumn{1}{c}{92.91(\textcolor{blue}{0.02})} & \multicolumn{1}{c}{92.62(\textcolor{blue}{0.27})} & \multicolumn{1}{c|}{85.42(\textcolor{blue}{7.47})} & \multicolumn{1}{c}{91.76(\textcolor{blue}{1.13})} 
\\
\multicolumn{1}{c|}{} & \multicolumn{1}{c|}{FA} & \multicolumn{1}{c|}{93.41} & \multicolumn{1}{c}{93.72(\textcolor{blue}{0.31})} & \multicolumn{1}{c}{92.77(\textcolor{blue}{0.64})} & \multicolumn{1}{c}{93.59(\textcolor{blue}{0.18})} & \multicolumn{1}{c|}{94.89(\textcolor{blue}{1.48})} & \multicolumn{1}{c}{\textbf{93.35(\textcolor{blue}{0.06})}} 
\\
\multicolumn{1}{c|}{} & \multicolumn{1}{c|}{MIA} & \multicolumn{1}{c|}{80.65} & \multicolumn{1}{c}{35.06(\textcolor{blue}{45.59})} & \multicolumn{1}{c}{30.69(\textcolor{blue}{49.96})} & \multicolumn{1}{c}{32.73(\textcolor{blue}{47.92})} & \multicolumn{1}{c|}{\textbf{78.17(\textcolor{blue}{2.48})}} & \multicolumn{1}{c}{51.24(\textcolor{blue}{29.41})} 
\\
\multicolumn{1}{c|}{} & \multicolumn{1}{c|}{AvgGap} & \multicolumn{1}{c|}{$-$} & \multicolumn{1}{c}{\textcolor{blue}{11.49}} & \multicolumn{1}{c}{\textcolor{blue}{12.66}} & \multicolumn{1}{c}{\textcolor{blue}{12.10}} & \multicolumn{1}{c|}{\textcolor{blue}{\textbf{4.64}}} & \multicolumn{1}{c}{\textcolor{blue}{7.65}}
\\ \bottomrule[1.5pt]
\end{tabular}
}
\end{table*}

\begin{table*}[t]
\caption{Comparison of MU performance with four state-of-the-art MU algorithms in ResNet34 with TinyImageNet-200 data set. We report the mean of each metric, with detailed standard deviation provided in Appendix~\ref{appx: exp}. The difference between the metrics and those of the $\vw_r$ model is highlighted in \textcolor{blue}{blue}, with the optimal performance emphasized in \textbf{bold}.}
\label{tab:Tiny_mean}
\resizebox{\linewidth}{!}{
\begin{tabular}{ccrrrrrr}
\toprule[1.5pt]\vspace{-0.2em}
\multirow{2.2}{*}{\begin{tabular}[c]{@{}c@{}}Unlearning\\ scenario\end{tabular}} & \multirow{2.2    }{*}{Metric} & \multicolumn{6}{c}{Methods}{} \\ \cmidrule[0.75pt]{3-8} 
 &  & \multicolumn{1}{c|}{Retrain} & \multicolumn{1}{c}{Amnesiac} & \multicolumn{1}{c}{BadTeacher} & \multicolumn{1}{c}{SalUn} & \multicolumn{1}{c|}{SSD} & \multicolumn{1}{c}{MUSO} \\ \midrule[1pt]

\multicolumn{1}{c|}{\multirow{5}{*}{\begin{tabular}[c]{@{}c@{}}Random\\ (1\%)\end{tabular}}} & \multicolumn{1}{c|}{RA} & \multicolumn{1}{c|}{\text{99.98}} & \multicolumn{1}{c}{99.98(\textcolor{blue}{0.00})} & \multicolumn{1}{c}{\textbf{99.98(\textcolor{blue}{0.00})}} & \multicolumn{1}{c}{99.98(\textcolor{blue}{0.00})} & \multicolumn{1}{c|}{96.75(\textcolor{blue}{3.23})} & \multicolumn{1}{c}{99.97(\textcolor{blue}{0.01})}
\\
\multicolumn{1}{c|}{} & \multicolumn{1}{c|}{TA} & \multicolumn{1}{c|}{\text{66.20}} & \multicolumn{1}{c}{64.18(\textcolor{blue}{2.02})} & \multicolumn{1}{c}{\textbf{64.43(\textcolor{blue}{1.77})}} & \multicolumn{1}{c}{64.15(\textcolor{blue}{2.05})} & \multicolumn{1}{c|}{61.33(\textcolor{blue}{4.87})} & \multicolumn{1}{c}{64.33(\textcolor{blue}{1.87})} 
\\
\multicolumn{1}{c|}{} & \multicolumn{1}{c|}{FA} & \multicolumn{1}{c|}{66.63} & \multicolumn{1}{c}{64.50(\textcolor{blue}{2.13})} & \multicolumn{1}{c}{66.40(\textcolor{blue}{0.23})} & \multicolumn{1}{c}{\textbf{66.55(\textcolor{blue}{0.08})}} & \multicolumn{1}{c|}{96.30(\textcolor{blue}{29.67})} & \multicolumn{1}{c}{66.85(\textcolor{blue}{0.22})}
\\
\multicolumn{1}{c|}{} & \multicolumn{1}{c|}{MIA} & \multicolumn{1}{c|}{42.63} & \multicolumn{1}{c}{1.50(\textcolor{blue}{41.13})} & \multicolumn{1}{c}{0.45(\textcolor{blue}{42.18})} & \multicolumn{1}{c}{2.65(\textcolor{blue}{39.98})} & \multicolumn{1}{c|}{87.80(\textcolor{blue}{45.17})} & \multicolumn{1}{c}{\textbf{9.35(\textcolor{blue}{33.28})}} 
\\
\multicolumn{1}{c|}{} & \multicolumn{1}{c|}{AvgGap} & \multicolumn{1}{c|}{$-$} & \multicolumn{1}{c}{\textcolor{blue}{11.32}} & \multicolumn{1}{c}{\textcolor{blue}{11.04}} & \multicolumn{1}{c}{\textcolor{blue}{10.53}} & \multicolumn{1}{c|}{\textcolor{blue}{20.74}} & \multicolumn{1}{c}{\textcolor{blue}{\textbf{8.84}}} 
\\ \midrule
\multicolumn{1}{c|}{\multirow{5}{*}{\begin{tabular}[c]{@{}c@{}}Random\\ (10\%)\end{tabular}}} & \multicolumn{1}{c|}{RA} & \multicolumn{1}{c|}{\text{99.97}} & \multicolumn{1}{c}{99.98(\textcolor{blue}{0.01})} & \multicolumn{1}{c}{\textbf{99.97(\textcolor{blue}{0.00})}} & \multicolumn{1}{c}{99.72(\textcolor{blue}{0.25})} & \multicolumn{1}{c|}{91.21(\textcolor{blue}{8.76})} & \multicolumn{1}{c}{99.47(\textcolor{blue}{0.50})} 
\\
\multicolumn{1}{c|}{} & \multicolumn{1}{c|}{TA} & \multicolumn{1}{c|}{\text{64.38}} & \multicolumn{1}{c}{62.52(\textcolor{blue}{1.86})} & \multicolumn{1}{c}{\textbf{62.61(\textcolor{blue}{1.77})}} & \multicolumn{1}{c}{60.51(\textcolor{blue}{3.87})} & \multicolumn{1}{c|}{59.91(\textcolor{blue}{4.47})} & \multicolumn{1}{c}{61.42(\textcolor{blue}{2.96})} 
\\
\multicolumn{1}{c|}{} & \multicolumn{1}{c|}{FA} & \multicolumn{1}{c|}{64.69} & \multicolumn{1}{c}{63.20(\textcolor{blue}{1.49})} & \multicolumn{1}{c}{66.32(\textcolor{blue}{1.63})} & \multicolumn{1}{c}{\textbf{65.17(\textcolor{blue}{0.48})}} & \multicolumn{1}{c|}{91.16(\textcolor{blue}{26.47})} & \multicolumn{1}{c}{66.42(\textcolor{blue}{1.73})} 
\\
\multicolumn{1}{c|}{} & \multicolumn{1}{c|}{MIA} & \multicolumn{1}{c|}{38.51} & \multicolumn{1}{c}{1.40(\textcolor{blue}{37.11})} & \multicolumn{1}{c}{0.00(\textcolor{blue}{38.51})} & \multicolumn{1}{c}{3.59(\textcolor{blue}{34.92})} & \multicolumn{1}{c|}{88.87(\textcolor{blue}{50.36})} & \multicolumn{1}{c}{\textbf{11.63(\textcolor{blue}{26.88})}} 
\\
\multicolumn{1}{c|}{} & \multicolumn{1}{c|}{AvgGap} & \multicolumn{1}{c|}{$-$} & \multicolumn{1}{c}{\textcolor{blue}{10.12}} & \multicolumn{1}{c}{\textcolor{blue}{10.47}} & \multicolumn{1}{c}{\textcolor{blue}{9.88}} & \multicolumn{1}{c|}{\textcolor{blue}{22.52}} & \multicolumn{1}{c}{\textcolor{blue}{\textbf{8.01}}}
\\ \bottomrule[1.5pt]
\end{tabular}
}
\end{table*}

\section{Conclusion}
This paper introduces an MU algorithm named MUSO, aiming to achieve exact MU by setting optimal labels for forgetting data and completing MU through optimization-based methods. Rigorous theoretical proofs demonstrate that under over-parameterized linear models, this relabeling, which integrates information from the remaining data, forgetting data, and model parameters, can enable the model to achieve exact MU. By approximating the projection matrix related to the remaining data, we reduce the time and memory costs required by the algorithm, enabling the extension of this algorithm to nonlinear NN models and large-scale data sets within an alternating optimization framework. Numerical experiments validate that our proposed MUSO achieves better and more stable unlearning effects in various scenarios. Thanks to the relabeling strategy in MUSO that can more accurately guide the model towards optimizing in the direction to the retrained model, we observe significant improvements over other SOTA methods in such metrics, particularly in full-class and sub-class unlearning scenarios. \changed{M5.1}{\link{R2.3}}{However, due to MUSO's reliance on the remaining data set for relabeling, it still faces computational challenges when applied to data scales of millions or even tens of millions. In the future, we may address this limitation by exploring block-wise matrix approximations and methods such as Nyström approximations.}

\backmatter








\section*{Declarations}

\begin{itemize}
\item Funding: The research leading to these results has received funding from the National Key Research Development Project (2023YFF1104202), the National Natural Science Foundation of China (62376155), and the Interdisciplinary Program of Shanghai Jiao Tong University (No. YG2022ZD031).
\end{itemize}

\bibliography{sn-bibliography}

\newpage

\begin{appendices}

\section{Discussions about change data and label simultaneously}\label{appx: discussions}
Assuming we combine $\mZ_u$ and a portion of $\mZ_r$ to form $\tilde{\mZ}_u$ for training, we will demonstrate below that this will not affect the results in Lemma~\ref{lemma:wrwpwu}. Recall that $\mZ_u\in\mathbb{R}^{D\times N_u}$, $\mZ_r\in\mathbb{R}^{D\times N_r}$ and typically $N_u \ll N_r$, so we can always find a sub-column of $\mZ_{r}$, denoted as $\mZ_{r,\mathrm{sub}}\in\mathbb{R}^{D\times N_u}$ to form $\tilde{\mZ}_u\coloneqq (1-c)\mZ_{r,\mathrm{sub}}+c\mZ_u$ ($0 < c < 1$). At this point, we will fine-tune $\vw_p$ using $\tilde{\mZ}_a\coloneqq[\mZ_r\ \tilde{\mZ}_u]=[\mZ_r\ \ (1-c)\mZ_{r,\mathrm{sub}}+c\mZ_u]$ and $\tilde{\vy}_a=[\vy_r;\tilde{\vy}_{u}]$. In some prior works, $\tilde{\vy}_u$ can be set as the label $\vy_{r,\mathrm{sub}}$ corresponding to $\mZ_{r,\mathrm{sub}}$ to emphasize the naturalness of the guidance information. In this scenario, we will demonstrate that the difference between $\vw_r$ and $\vw_u$ is independent of the scalar $c$.

With $\boldsymbol{\Pi}_r\coloneqq\mZ_r(\mZ_r^\top \mZ_r)^{-1}\mZ_r^\top$, we can refer to Lemma~\ref{lemma:wrwpwu} and utilize $\vw_p$ as a bridge to elucidate the relationships between $\vw_r$ and $\vw_p$, and between $\vw_p$ and $\vw_u$. For the former, we still have 
\begin{equation}
    \vw_r - \vw_p = \left(\mI_{D} - \boldsymbol{\Pi}_r\right)\mZ_u\mM\left(\mK_{ur}\mK_{rr}^{-1}\left(\vy_r - \mZ_r^\top\vw_{\mathrm{init}}\right)+\mZ_u^\top\vw_{\mathrm{init}}- \vy_u\right),
    \label{equ: appx1}
\end{equation}
which remains consistent with equation~(\ref{equ: wr-wp}). Supposing $\tilde{\mK}_{ru} = \mZ_r^\top\tilde{\mZ_u}$, $\tilde{\mK}_{uu} = \tilde{\mZ}_u^\top\tilde{\mZ}_u$, and $\tilde{\mM} = \left(\tilde{\mK}_{uu} - \tilde{\mK}_{ur}\mK_{rr}^{-1}\tilde{\mK}_{ru}\right)^{-1}$, the latter will be rewritten as
\begin{equation}
    \begin{aligned}
        &\vw_{p} - \vw_u \\
        &=\ \tilde{\mZ}_a\left(\tilde{\mZ}_a^\top\tilde{\mZ}_a\right)^{-1}\left(\tilde{\mZ}_a^\top\vw_{p}-\tilde{\vy}_a\right)\\
        &=[\mZ_r\ \tilde{\mZ}_u]\begin{bmatrix}
            \mK_{rr}^{-1}+\mK_{rr}^{-1}\tilde{\mK}_{ru}\tilde{\mM}\tilde{\mK}_{ur}\mK_{rr}^{-1} & -\mK_{rr}^{-1}\tilde{\mK}_{ru}\tilde{\mM} \\
            -\tilde{\mM}\tilde{\mK}_{ur}\mK_{rr}^{-1} & \tilde{\mM}
        \end{bmatrix}\begin{bmatrix}
            \mZ_r^\top\vw_{p} - \vy_r \\
            \tilde{\mZ}_u^\top\vw_{p} - \tilde{\vy}_{u} 
        \end{bmatrix}\\
        &=-\mZ_r\mK_{rr}^{-1}\tilde{\mK}_{ru}\tilde{\mM}\left(\tilde{\mZ}_u^\top\vw_{p} - \tilde{\vy}_u \right) + \tilde{\mZ}_u\tilde{\mM}\left(\tilde{\mZ}_u^\top\vw_{p} - \tilde{\vy}_{u} \right)\\
        &=\left(\mI_{D} - \boldsymbol{\Pi}_r\right)\tilde{\mZ}_u\tilde{\mM}\left(\tilde{\mZ}_u^\top\vw_{p} - \tilde{\vy}_{u} \right),
    \end{aligned}
    \label{equ: appx2}
\end{equation}
where the third ``$=$'' holds because $\vy_r = \mZ_r^\top\vw_p$ is valid when the model operates in the over-parameterized regime. If we want to combine equation~(\ref{equ: appx1}) and (\ref{equ: appx2}), we first need to consider the relationship between $\mM$ and $\tilde{\mM}$, which can be expressed as
\begin{equation}
    \begin{aligned}
        \mathrm{inv}(\tilde{\mM}) =&\ \tilde{\mK}_{uu} - \tilde{\mK}_{ur}\mK_{rr}^{-1}\tilde{\mK}_{ru}\\
        =&\left(\left((1-c)\mZ_{r,\mathrm{sub}}+c\mZ_u\right)^\top\left((1-c)\mZ_{r,\mathrm{sub}}+c\mZ_u\right)\right.
        \\
        &-\left.\left((1-c)\mZ_{r,\mathrm{sub}}+c\mZ_u\right)^\top\mZ_r\mK_{rr}^{-1}\mZ_r^\top\left((1-c)\mZ_{r,\mathrm{sub}}+c\mZ_u\right)\right)\\
        =&\ (1-c)^2\mZ_{r,\mathrm{sub}}^\top \mZ_{r,\mathrm{sub}} +c(1-c)\mZ_{r,\mathrm{sub}}^\top\mZ_{u}+c(1-c)\mZ_u^\top\mZ_{r,\mathrm{sub}}+ c^2\mK_{uu}\\
        &- (1-c)^2\mZ_{r,\mathrm{sub}}^\top\mZ_{r}\mK_{rr}^{-1}\mZ_r^\top\mZ_{r,\mathrm{sub}} - c(1-c)\mZ_{r,\mathrm{sub}}^\top\mZ_{r}\mK_{rr}^{-1}\mZ_r^\top\mZ_{u}\\
        &-c(1-c)\mZ_{u}^\top\mZ_{r}\mK_{rr}^{-1}\mZ_r^\top\mZ_{r,\mathrm{sub}} - c^2\mK_{ur}\mK_{rr}^{-1}\mK _{ru}\\
        =&\ c^2\mK_{uu} - c^2\mK_{ur}\mK_{rr}^{-1}\mK_{ru}\\
        =&\ c^2\mathrm{inv}(\mM).
    \end{aligned}
    \label{equ: MandM}
\end{equation}
The fourth ``$=$'' holds because of $\mathrm{col}(\mZ_{r,\mathrm{sub}})\subseteq\mathrm{col}(\mZ_r)$, and $\mZ_{r}\mK_{rr}^{-1}\mZ_r^\top$ is the projection matrix onto $\mathrm{col}(\mZ_r)$. Therefore, the projection of $\mZ_{r,\mathrm{sub}}$ onto $\mathrm{col}(\mZ_r)$ is itself, implying $\mZ_{r}\mK_{rr}^{-1}\mZ_r^\top\mZ_{r,\mathrm{sub}}=\mZ_{r,\mathrm{sub}}$ holds. The equation~(\ref{equ: MandM}) yields $\tilde{\mM} = \frac{1}{c^2}\mM$, which can then be substituted into equation~(\ref{equ: appx2}) to obtain
\begin{equation}
    \begin{aligned}
        \vw_p - \vw_u =&\left(\mI_{D} - \boldsymbol{\Pi}_r\right)\left((1-c)\mZ_{r,\mathrm{sub}}+c\mZ_u\right)\frac{1}{c^2}\mM\left(\tilde{\mZ}_u^\top\vw_{p} - \vy_{r,\mathrm{sub}} \right)\\
        =&\left(\mI_{D} - \boldsymbol{\Pi}_r\right)c\mZ_u\frac{1}{c^2}\mM\left((1-c)\mZ_{r,\mathrm{sub}}^\top\vw_{p} + c\mZ_u^\top\vw_p - \vy_{r,\mathrm{sub}} \right)\\
        =&\left(\mI_{D} - \boldsymbol{\Pi}_r\right)\mZ_u\frac{1}{c}\mM\left(c\mZ_u^\top\vw_p - c\vy_{r,\mathrm{sub}}\right)\\
        =&\left(\mI_{D} - \boldsymbol{\Pi}_r\right)\mZ_u\mM\left(\mZ_u^\top\vw_p - \vy_{r,\mathrm{sub}}\right).
    \end{aligned}
\end{equation}
Adding the above equation to equation~(\ref{equ: appx1}), we obtain
\begin{equation}
    \begin{aligned}
        &\vw_r - \vw_u\\
        &=\ \vw_r - \vw_p + \vw_p - \vw_u\\
        &=\left(\mI_{D} - \boldsymbol{\Pi}_r\right)\mZ_u\mM\left(\mK_{ur}\mK_{rr}^{-1}\left(\vy_r - \mZ_r^\top\vw_{\mathrm{init}}\right)+\mZ_u^\top\vw_{\mathrm{init}}- \vy_u+\mZ_u^\top\vw_p - \vy_{r,\mathrm{sub}}\right)\\
        &=\left(\mI_{D} - \boldsymbol{\Pi}_r\right)\mZ_u\mM\left(\mK_{ur}\mK_{rr}^{-1}\left(\vy_r - \mZ_r^\top\vw_{\mathrm{init}}\right)+\mZ_u^\top\vw_{\mathrm{init}}- \vy_{r,\mathrm{sub}}\right),
    \end{aligned}
\end{equation}
which will depend solely on $\vy_{r,\mathrm{sub}}$ and be independent of $c$.

\section{Proof of Theorem~\ref{theo: main}}\label{appx: proof}
Recall definitions of $\mK_{rr} = \mZ_r^\top\mZ_r$, $\mK_{ur} = \mZ_u^\top\mZ_r$ and $\boldsymbol{\Pi}_r\coloneqq\mZ_r(\mZ_r^\top \mZ_r)^{-1}\mZ_r^\top$. By adding the two equations in Lemma~\ref{lemma:wrwpwu}, we have
\begin{equation}
    \begin{aligned}
        &\vw_r - \vw_p \\
        & = \left(\mI_{D} - \boldsymbol{\Pi}_r\right)\mZ_u\mM\left(\mK_{ur}\mK_{rr}^{-1}\left(\vy_r - \mZ_r^\top\vw_{\mathrm{init}}\right)+\mZ_u^\top\vw_{\mathrm{init}}- \vy_u + \mZ_u^\top\vw_{p} - \tilde{\vy}_u\right) \\  
        & = \left(\mI_{D} - \boldsymbol{\Pi}_r\right)\mZ_u\mM\left(\mK_{ur}\mK_{rr}^{-1}\left(\mZ_u^\top\vw_p - \mZ_r^\top\vw_{\mathrm{init}}\right)+\mZ_u^\top\vw_{\mathrm{init}}- \tilde{\vy}_u\right) \\ 
        & = \left(\mI_{D} - \boldsymbol{\Pi}_r\right)\mZ_u\mM\left(\mZ_u^\top\boldsymbol{\Pi}_r\left(\vw_p - \vw_{\mathrm{init}}\right)+\mZ_u^\top\vw_{\mathrm{init}} - \tilde{\vy}_u\right),
    \end{aligned}
\end{equation}
which completes the proof.
\qed

\section{Algorithm for column selecting}\label{appx: colselect}
\renewcommand{\thealgorithm}{C.1}
\begin{algorithm}[hp]
    \caption{Column Select}
    \label{Algo: CS}
    \begin{algorithmic}[1]
        \Require the matrix $\mX\in\mathbb{R}^{d\times N}$, the rank parameter $k$, and the positive number $c<N$.
        \Ensure the sub-matrix $\mX_{\mathrm{sub}}\in\mathbb{R}^{d\times c}$.
        \State Obtain the top $k$ right singular vectors $\mV_k = \texttt{TruncatedSVD}(\mX,k)\in\mathbb{R}^{k\times N}$.
        \State $p_i = \frac{1}{k}\|{(\mV_k)}_{:,i}\|_2^2$, $\forall i \in [N]$.
        \State Construct $\mX_{\mathrm{sub}}$ by re-sampling $c$ columns from $\mX$ using the multinominal distribution given by the vector $(p_1,\ \ldots,\ p_N)$.
    \end{algorithmic}
\end{algorithm}

\section{More details about Table~\ref{tab:mnist_mean},~\ref{tab:NN_mean},~\ref{tab:VGG_mean} and~\ref{tab:Tiny_mean}.}\label{appx: exp}
\begin{table*}[hp]
\caption{Comparison of MU performance with different forgetting data relabel methods in over-parameterized linear models. We report the mean$\pm$std of each metric. The best performance is highlighted in \textbf{bold}.}
\label{tab:mnist_std}
\centering
\resizebox{\linewidth}{!}{
\begin{tabular}{ccccccc}
\toprule[1.5pt]\vspace{-0.2em}
\multirow{2}{*}{\begin{tabular}[c]{@{}c@{}}Unlearning \\ scenario\end{tabular}} & \multirow{2}{*}{Metric} & \multicolumn{5}{c}{Methods} \\ \cmidrule[0.75pt]{3-7}  
 &  & \multicolumn{1}{c|}{Retrain} & \multicolumn{1}{c}{Pre-trained} & \multicolumn{1}{c}{Amnesiac} & \multicolumn{1}{c|}{BadTeacher} & \multicolumn{1}{c}{MUSO} \\ \midrule[1pt]
 \multicolumn{1}{c|}{\multirow{4}{*}{Full-class}} & \multicolumn{1}{c|}{RA} & \multicolumn{1}{c|}{100.00$\pm$\text{0.00}} & \multicolumn{1}{c}{100.00$\pm$\text{0.00}} & \multicolumn{1}{c}{100.00$\pm$\text{0.00}} & \multicolumn{1}{c|}{100.00$\pm$\text{0.00}} & \multicolumn{1}{c}{\textbf{100.00$\pm$\text{0.00}}} \\
 \multicolumn{1}{c|}{} & \multicolumn{1}{c|}{TA} & \multicolumn{1}{c|}{50.77$\pm$\text{1.80}} & \multicolumn{1}{c}{94.42$\pm$\text{0.70}} & \multicolumn{1}{c}{56.00$\pm$\text{2.16}} & \multicolumn{1}{c|}{56.38$\pm$\text{2.28}} & \multicolumn{1}{c}{\textbf{50.87$\pm$\text{1.89}}} \\
\multicolumn{1}{c|}{} & \multicolumn{1}{c|}{FA} & \multicolumn{1}{c|}{5.33$\pm$\text{3.97}} & \multicolumn{1}{c}{100.00$\pm$\text{0.00}} & \multicolumn{1}{c}{6.53$\pm$\text{2.24}} & \multicolumn{1}{c|}{5.83$\pm$\text{4.20}} & \multicolumn{1}{c}{\textbf{5.39$\pm$\text{4.17}}} \\
\multicolumn{1}{c|}{} & \multicolumn{1}{c|}{$\Delta_{\|\vw\|}$} & \multicolumn{1}{c|}{$-$} & \multicolumn{1}{c}{{0.1027$\pm$\text{0.0029}}} & \multicolumn{1}{c}{{0.1037$\pm$\text{0.0038}}} & \multicolumn{1}{c|}{{0.1039$\pm$\text{0.0044}}} & \multicolumn{1}{c}{\textbf{{0.0001$\pm$\text{0.001}}}} \\ \midrule
\multicolumn{1}{c|}{\multirow{4}{*}{Sub-class}} & \multicolumn{1}{c|}{RA} & \multicolumn{1}{c|}{100.00$\pm$\text{0.00}} & \multicolumn{1}{c}{100.00$\pm$\text{0.00}} & \multicolumn{1}{c}{100.00$\pm$\text{0.00}} & \multicolumn{1}{c|}{100.00$\pm$\text{0.00}} & \multicolumn{1}{c}{\textbf{100.00$\pm$\text{0.00}}} \\
\multicolumn{1}{c|}{} & \multicolumn{1}{c|}{TA} & \multicolumn{1}{c|}{92.10$\pm$\text{0.78}} & \multicolumn{1}{c}{94.58$\pm$\text{0.61}} & \multicolumn{1}{c}{84.63$\pm$\text{2.27}} & \multicolumn{1}{c|}{83.25$\pm$\text{3.37}} & \multicolumn{1}{c}{\textbf{92.05$\pm$\text{0.74}}} \\
\multicolumn{1}{c|}{} & \multicolumn{1}{c|}{FA} & \multicolumn{1}{c|}{87.60$\pm$\text{1.69}} & \multicolumn{1}{c}{100.00$\pm$\text{0.00}} & \multicolumn{1}{c}{88.20$\pm$\text{1.50}} & \multicolumn{1}{c|}{\textbf{87.60$\pm$\text{1.74}}} & \multicolumn{1}{c}{87.30$\pm$\text{2.25}} \\
\multicolumn{1}{c|}{} & \multicolumn{1}{c|}{$\Delta_{\|\vw\|}$} & \multicolumn{1}{c|}{$-$} & \multicolumn{1}{c}{{0.0554$\pm$\text{0.0014}}} & \multicolumn{1}{c}{{0.0566$\pm$\text{0.0015}}} & \multicolumn{1}{c|}{{0.0577$\pm$\text{0.0042}}} & \multicolumn{1}{c}{\textbf{{0.0001$\pm$\text{0.0000}}}} \\ \midrule
\multicolumn{1}{c|}{\multirow{4}{*}{Random}} & \multicolumn{1}{c|}{RA} & \multicolumn{1}{c|}{100.00$\pm$\text{0.00}} & \multicolumn{1}{c}{100.00$\pm$\text{0.00}} & \multicolumn{1}{c}{96.80$\pm$\text{2.84}} & \multicolumn{1}{c|}{95.00$\pm$\text{4.02}} & \multicolumn{1}{c}{\textbf{100.00$\pm$\text{0.00}}} \\
\multicolumn{1}{c|}{} & \multicolumn{1}{c|}{TA} & \multicolumn{1}{c|}{92.63$\pm$\text{0.71}} & \multicolumn{1}{c}{94.06$\pm$\text{0.65}} & \multicolumn{1}{c}{84.75$\pm$\text{1.59}} & \multicolumn{1}{c|}{80.97$\pm$\text{4.61}} & \multicolumn{1}{c}{\textbf{92.74$\pm$\text{0.70}}} \\
\multicolumn{1}{c|}{} & \multicolumn{1}{c|}{FA} & \multicolumn{1}{c|}{93.90$\pm$\text{0.92}} & \multicolumn{1}{c}{100.00$\pm$\text{0.00}} & \multicolumn{1}{c}{94.20$\pm$\text{1.86}} & \multicolumn{1}{c|}{93.40$\pm$\text{1.50}} & \multicolumn{1}{c}{\textbf{93.90$\pm$\text{0.92}}} \\
\multicolumn{1}{c|}{} & \multicolumn{1}{c|}{$\Delta_{\|\vw\|}$} & \multicolumn{1}{c|}{$-$} & \multicolumn{1}{c}{{0.0485$\pm$\text{0.0034}}} & \multicolumn{1}{c}{{0.0501$\pm$\text{0.0043}}} & \multicolumn{1}{c|}{{0.0506$\pm$\text{0.0036}}} & \multicolumn{1}{c}{\textbf{{0.0002$\pm$\text{0.0001}}}} \\ \bottomrule[1.5pt]
\end{tabular}
}
\end{table*}

\begin{table*}[hp]
\caption{Comparison of MU performance with four state-of-the-art MU algorithms in ResNet18 with CIFAR-20/100 data set. We report the mean$\pm$std of each metric. The best performance is highlighted in \textbf{bold}.}
\label{tab:NN_std}
\centering
\resizebox{\linewidth}{!}{
\begin{tabular}{ccrrrrrr}
\toprule[1.5pt]\vspace{-0.2em}
\multirow{2}{*}{\begin{tabular}[c]{@{}c@{}}Forgetting\\ scenario\end{tabular}} & \multirow{2}{*}{Metric} & \multicolumn{6}{c}{Methods}{} \\ \cmidrule[0.75pt]{3-8} 
 &  & \multicolumn{1}{c|}{Retrain} & \multicolumn{1}{c}{Amnesiac} & \multicolumn{1}{c}{BadTeacher} & \multicolumn{1}{c}{SalUn} & \multicolumn{1}{c|}{SSD} & \multicolumn{1}{c}{MUSO} \\ \midrule[1pt]
\multicolumn{1}{c|}{\multirow{4}{*}{\begin{tabular}[c]{@{}c@{}}Full-class\\ (Rocket)\end{tabular}}} & \multicolumn{1}{c|}{RA} & \multicolumn{1}{c|}{\text{99.96}$\pm$\text{0.01}} & \multicolumn{1}{c}{99.97$\pm$\text{0.00}} & \multicolumn{1}{c}{99.93$\pm$\text{0.01}} & \multicolumn{1}{c}{99.97$\pm$\text{0.00}} & \multicolumn{1}{c|}{99.08$\pm$\text{0.04}} & \multicolumn{1}{c}{\textbf{99.97$\pm$\text{0.00}}} \\
\multicolumn{1}{c|}{} & \multicolumn{1}{c|}{TA} & \multicolumn{1}{c|}{\text{76.26}$\pm$\text{0.23}} & \multicolumn{1}{c}{76.72$\pm$\text{0.16}} & \multicolumn{1}{c}{76.69$\pm$\text{0.12}} & \multicolumn{1}{c}{76.66$\pm$\text{0.14}} & \multicolumn{1}{c|}{76.52$\pm$\text{0.07}} & \multicolumn{1}{c}{\textbf{76.23$\pm$\text{0.35}}} \\
\multicolumn{1}{c|}{} & \multicolumn{1}{c|}{FA} & \multicolumn{1}{c|}{0.00$\pm$\text{0.00}} & \multicolumn{1}{c}{0.00$\pm$\text{0.00}} & \multicolumn{1}{c}{1.20$\pm$\text{0.45}} & \multicolumn{1}{c}{0.00$\pm$\text{0.00}} & \multicolumn{1}{c|}{0.00$\pm$\text{0.00}} & \multicolumn{1}{c}{\textbf{0.00$\pm$\text{0.00}}} \\
\multicolumn{1}{c|}{} & \multicolumn{1}{c|}{MIA} & \multicolumn{1}{c|}{15.96$\pm$\text{1.69}} & \multicolumn{1}{c}{9.08$\pm$\text{1.04}} & \multicolumn{1}{c}{0.00$\pm$\text{0.00}} & \multicolumn{1}{c}{5.28$\pm$\text{0.78}} & \multicolumn{1}{c|}{1.36$\pm$\text{0.17}} & \multicolumn{1}{c}{\textbf{20.24$\pm$\text{1.19}}} \\ \midrule
\multicolumn{1}{c|}{\multirow{4}{*}{\begin{tabular}[c]{@{}c@{}}Full-class\\ (Sea)\end{tabular}}} & \multicolumn{1}{c|}{RA} & \multicolumn{1}{c|}{\text{99.94}$\pm$\text{0.03}} & \multicolumn{1}{c}{99.97$\pm$\text{0.00}} & \multicolumn{1}{c}{\textbf{99.94$\pm$\text{0.01}}} & \multicolumn{1}{c}{99.97$\pm$\text{0.00}} & \multicolumn{1}{c|}{98.91$\pm$\text{0.32}} & \multicolumn{1}{c}{99.96$\pm$\text{0.01}} \\
\multicolumn{1}{c|}{} & \multicolumn{1}{c|}{TA} & \multicolumn{1}{c|}{\text{76.53}$\pm$\text{0.13}} & \multicolumn{1}{c}{76.17$\pm$\text{0.25}} & \multicolumn{1}{c}{77.65$\pm$\text{0.14}} & \multicolumn{1}{c}{\textbf{76.46$\pm$\text{0.36}}} & \multicolumn{1}{c|}{74.54$\pm$\text{0.28}} & \multicolumn{1}{c}{76.44$\pm$\text{0.27}} \\
\multicolumn{1}{c|}{} & \multicolumn{1}{c|}{FA} & \multicolumn{1}{c|}{0.00$\pm$\text{0.00}} & \multicolumn{1}{c}{0.00$\pm$\text{0.00}} & \multicolumn{1}{c}{32.20$\pm$\text{5.26}} & \multicolumn{1}{c}{0.00$\pm$\text{0.00}} & \multicolumn{1}{c|}{0.00$\pm$\text{0.00}} & \multicolumn{1}{c}{\textbf{0.00$\pm$\text{0.00}}} \\
\multicolumn{1}{c|}{} & \multicolumn{1}{c|}{MIA} & \multicolumn{1}{c|}{25.64$\pm$\text{2.12}} & \multicolumn{1}{c}{5.88$\pm$\text{0.76}} & \multicolumn{1}{c}{0.00$\pm$\text{0.00}} & \multicolumn{1}{c}{5.88$\pm$\text{1.01}} & \multicolumn{1}{c|}{1.64$\pm$\text{0.30}} & \multicolumn{1}{c}{\textbf{24.45$\pm$\text{4.27}}} \\ \midrule
\multicolumn{1}{c|}{\multirow{4}{*}{\begin{tabular}[c]{@{}c@{}}Full-class\\ (Cattle)\end{tabular}}} & \multicolumn{1}{c|}{RA} & \multicolumn{1}{c|}{\text{99.93}$\pm$\text{0.03}} & \multicolumn{1}{c}{99.98$\pm$\text{0.00}} & \multicolumn{1}{c}{\textbf{99.94$\pm$\text{0.03}}} & \multicolumn{1}{c}{99.97$\pm$\text{0.00}} & \multicolumn{1}{c|}{99.07$\pm$\text{0.01}} & \multicolumn{1}{c}{99.97$\pm$\text{0.00}} \\
\multicolumn{1}{c|}{} & \multicolumn{1}{c|}{TA} & \multicolumn{1}{c|}{\text{76.69}$\pm$\text{0.15}} & \multicolumn{1}{c}{76.97$\pm$\text{0.13}} & \multicolumn{1}{c}{77.70$\pm$\text{0.07}} & \multicolumn{1}{c}{76.60$\pm$\text{0.19}} & \multicolumn{1}{c|}{76.63$\pm$\text{0.03}} & \multicolumn{1}{c}{\textbf{76.68$\pm$\text{0.12}}} \\
\multicolumn{1}{c|}{} & \multicolumn{1}{c|}{FA} & \multicolumn{1}{c|}{0.00$\pm$\text{0.00}} & \multicolumn{1}{c}{0.00$\pm$\text{0.00}} & \multicolumn{1}{c}{6.80$\pm$\text{3.03}} & \multicolumn{1}{c}{0.00$\pm$\text{0.00}} & \multicolumn{1}{c|}{0.00$\pm$\text{0.00}} & \multicolumn{1}{c}{\textbf{0.00$\pm$\text{0.00}}} \\
\multicolumn{1}{c|}{} & \multicolumn{1}{c|}{MIA} & \multicolumn{1}{c|}{11.48$\pm$\text{1.44}} & \multicolumn{1}{c}{10.20$\pm$\text{0.84}} & \multicolumn{1}{c}{0.00$\pm$\text{0.00}} & \multicolumn{1}{c}{\textbf{10.60$\pm$\text{0.35}}} & \multicolumn{1}{c|}{2.48$\pm$\text{0.18}} & \multicolumn{1}{c}{8.67$\pm$\text{1.10}} \\ \midrule[1.5pt]
\multicolumn{1}{c|}{\multirow{4}{*}{\begin{tabular}[c]{@{}c@{}}Sub-class\\ (Rocket)\end{tabular}}} & \multicolumn{1}{c|}{RA} & \multicolumn{1}{c|}{\text{99.96}$\pm$\text{0.01}} & \multicolumn{1}{c}{\textbf{99.97$\pm$\text{0.00}}} & \multicolumn{1}{c}{99.91$\pm$\text{0.02}} & \multicolumn{1}{c}{99.93$\pm$\text{0.01}} & \multicolumn{1}{c|}{99.29$\pm$\text{0.15}} & \multicolumn{1}{c}{99.99$\pm$\text{0.00}} \\
\multicolumn{1}{c|}{} & \multicolumn{1}{c|}{TA} & \multicolumn{1}{c|}{\text{84.95}$\pm$\text{0.24}} & \multicolumn{1}{c}{84.55$\pm$\text{0.07}} & \multicolumn{1}{c}{\textbf{84.99$\pm$\text{0.13}}} & \multicolumn{1}{c}{84.58$\pm$\text{0.06}} & \multicolumn{1}{c|}{84.74$\pm$\text{0.06}} & \multicolumn{1}{c}{84.46$\pm$\text{0.43}} \\
\multicolumn{1}{c|}{} & \multicolumn{1}{c|}{FA} & \multicolumn{1}{c|}{2.60$\pm$\text{0.55}} & \multicolumn{1}{c}{\textbf{2.60$\pm$\text{0.55}}} & \multicolumn{1}{c}{6.20$\pm$\text{2.17}} & \multicolumn{1}{c}{3.20$\pm$\text{0.84}} & \multicolumn{1}{c|}{4.00$\pm$\text{2.12}} & \multicolumn{1}{c}{1.00$\pm$\text{0.00}} \\
\multicolumn{1}{c|}{} & \multicolumn{1}{c|}{MIA} & \multicolumn{1}{c|}{21.36$\pm$\text{2.05}} & \multicolumn{1}{c}{0.00$\pm$\text{0.00}} & \multicolumn{1}{c}{0.00$\pm$\text{0.00}} & \multicolumn{1}{c}{0.00$\pm$\text{0.00}} & \multicolumn{1}{c|}{5.28$\pm$\text{0.41}} & \multicolumn{1}{c}{\textbf{20.84$\pm$\text{4.20}}} \\ \midrule
\multicolumn{1}{c|}{\multirow{4}{*}{\begin{tabular}[c]{@{}c@{}}Sub-class\\ (Sea)\end{tabular}}} & \multicolumn{1}{c|}{RA} & \multicolumn{1}{c|}{\text{99.93}$\pm$\text{0.03}} & \multicolumn{1}{c}{\textbf{99.97$\pm$\text{0.00}}} & \multicolumn{1}{c}{99.37$\pm$\text{0.02}} & \multicolumn{1}{c}{99.81$\pm$\text{0.01}} & \multicolumn{1}{c|}{98.17$\pm$\text{0.28}} & \multicolumn{1}{c}{99.60$\pm$\text{0.15}} \\
\multicolumn{1}{c|}{} & \multicolumn{1}{c|}{TA} & \multicolumn{1}{c|}{\text{84.74}$\pm$\text{0.25}} & \multicolumn{1}{c}{86.37$\pm$\text{4.45}} & \multicolumn{1}{c}{\textbf{84.82$\pm$\text{0.14}}} & \multicolumn{1}{c}{84.46$\pm$\text{0.05}} & \multicolumn{1}{c|}{84.51$\pm$\text{0.06}} & \multicolumn{1}{c}{82.62$\pm$\text{0.14}} \\
\multicolumn{1}{c|}{} & \multicolumn{1}{c|}{FA} & \multicolumn{1}{c|}{82.40$\pm$\text{3.36}} & \multicolumn{1}{c}{78.00$\pm$\text{2.35}} & \multicolumn{1}{c}{\textbf{78.00$\pm$\text{1.58}}} & \multicolumn{1}{c}{79.20$\pm$\text{0.84}} & \multicolumn{1}{c|}{64.60$\pm$\text{2.07}} & \multicolumn{1}{c}{88.80$\pm$\text{1.10}} \\
\multicolumn{1}{c|}{} & \multicolumn{1}{c|}{MIA} & \multicolumn{1}{c|}{59.08$\pm$\text{1.25}} & \multicolumn{1}{c}{0.04$\pm$\text{0.09}} & \multicolumn{1}{c}{0.16$\pm$\text{0.09}} & \multicolumn{1}{c}{0.00$\pm$\text{0.00}} & \multicolumn{1}{c|}{7.48$\pm$\text{2.24}} & \multicolumn{1}{c}{\textbf{55.88$\pm$\text{2.11}}} \\ \midrule
\multicolumn{1}{c|}{\multirow{4}{*}{\begin{tabular}[c]{@{}c@{}}Sub-class\\ (Cattle)\end{tabular}}} & \multicolumn{1}{c|}{RA} & \multicolumn{1}{c|}{\text{99.93}$\pm$\text{0.04}} & \multicolumn{1}{c}{99.97$\pm$\text{0.00}} & \multicolumn{1}{c}{99.39$\pm$\text{0.01}} & \multicolumn{1}{c}{\textbf{99.93$\pm$\text{0.01}}} & \multicolumn{1}{c|}{99.06$\pm$\text{0.45}} & \multicolumn{1}{c}{99.92$\pm$\text{0.02}} \\
\multicolumn{1}{c|}{} & \multicolumn{1}{c|}{TA} & \multicolumn{1}{c|}{\text{84.83}$\pm$\text{0.18}} & \multicolumn{1}{c}{84.54$\pm$\text{0.06}} & \multicolumn{1}{c}{85.01$\pm$\text{0.12}} & \multicolumn{1}{c}{\textbf{84.72$\pm$\text{0.03}}} & \multicolumn{1}{c|}{83.51$\pm$\text{0.36}} & \multicolumn{1}{c}{83.95$\pm$\text{0.08}} \\
\multicolumn{1}{c|}{} & \multicolumn{1}{c|}{FA} & \multicolumn{1}{c|}{47.80$\pm$\text{1.79}} & \multicolumn{1}{c}{38.40$\pm$\text{4.39}} & \multicolumn{1}{c}{42.80$\pm$\text{2.86}} & \multicolumn{1}{c}{33.80$\pm$\text{1.92}} & \multicolumn{1}{c|}{16.40$\pm$\text{7.47}} & \multicolumn{1}{c}{\textbf{52.40$\pm$\text{1.67}}} \\
\multicolumn{1}{c|}{} & \multicolumn{1}{c|}{MIA} & \multicolumn{1}{c|}{26.64$\pm$\text{1.19}} & \multicolumn{1}{c}{0.00$\pm$\text{0.00}} & \multicolumn{1}{c}{0.00$\pm$\text{0.00}} & \multicolumn{1}{c}{0.12$\pm$\text{0.11}} & \multicolumn{1}{c|}{8.80$\pm$\text{2.22}} & \multicolumn{1}{c}{\textbf{25.14$\pm$\text{2.51}}} \\ \midrule[1.5pt]
\multicolumn{1}{c|}{\multirow{4}{*}{\begin{tabular}[c]{@{}c@{}}Random\\ (1\%)\end{tabular}}} & \multicolumn{1}{c|}{RA} & \multicolumn{1}{c|}{\text{99.99}$\pm$\text{0.01}} & \multicolumn{1}{c}{99.92$\pm$\text{0.01}} & \multicolumn{1}{c}{99.88$\pm$\text{0.01}} & \multicolumn{1}{c}{\textbf{99.93$\pm$\text{0.00}}} & \multicolumn{1}{c|}{98.16$\pm$\text{0.00}} & \multicolumn{1}{c}{99.93$\pm$\text{0.01}} \\
\multicolumn{1}{c|}{} & \multicolumn{1}{c|}{TA} & \multicolumn{1}{c|}{\text{76.41}$\pm$\text{0.31}} & \multicolumn{1}{c}{76.00$\pm$\text{0.10}} & \multicolumn{1}{c}{75.77$\pm$\text{0.17}} & \multicolumn{1}{c}{\textbf{76.16$\pm$\text{0.06}}} & \multicolumn{1}{c|}{72.48$\pm$\text{1.03}} & \multicolumn{1}{c}{75.93$\pm$\text{0.18}} \\
\multicolumn{1}{c|}{} & \multicolumn{1}{c|}{FA} & \multicolumn{1}{c|}{75.28$\pm$\text{0.33}} & \multicolumn{1}{c}{\textbf{75.68$\pm$\text{0.54}}} & \multicolumn{1}{c}{76.60$\pm$\text{1.48}} & \multicolumn{1}{c}{82.16$\pm$\text{0.77}} & \multicolumn{1}{c|}{93.40$\pm$\text{1.50}} & \multicolumn{1}{c}{76.20$\pm$\text{1.36}} \\
\multicolumn{1}{c|}{} & \multicolumn{1}{c|}{MIA} & \multicolumn{1}{c|}{55.44$\pm$\text{0.33}} & \multicolumn{1}{c}{10.28$\pm$\text{0.39}} & \multicolumn{1}{c}{10.80$\pm$\text{0.20}} & \multicolumn{1}{c}{18.44$\pm$\text{0.61}} & \multicolumn{1}{c|}{\textbf{79.16$\pm$\text{2.05}}} & \multicolumn{1}{c}{19.72$\pm$\text{1.21}} \\ \midrule
\multicolumn{1}{c|}{\multirow{4}{*}{\begin{tabular}[c]{@{}c@{}}Random\\ (10\%)\end{tabular}}} & \multicolumn{1}{c|}{RA} & \multicolumn{1}{c|}{\text{99.99}$\pm$\text{0.00}} & \multicolumn{1}{c}{\textbf{99.89$\pm$\text{0.00}}} & \multicolumn{1}{c}{99.86$\pm$\text{0.01}} & \multicolumn{1}{c}{99.83$\pm$\text{0.01}} & \multicolumn{1}{c|}{96.04$\pm$\text{0.97}} & \multicolumn{1}{c}{99.85$\pm$\text{0.04}} \\
\multicolumn{1}{c|}{} & \multicolumn{1}{c|}{TA} & \multicolumn{1}{c|}{\text{75.44}$\pm$\text{0.05}} & \multicolumn{1}{c}{72.70$\pm$\text{0.39}} & \multicolumn{1}{c}{\textbf{74.44$\pm$\text{0.15}}} & \multicolumn{1}{c}{72.28$\pm$\text{0.15}} & \multicolumn{1}{c|}{72.59$\pm$\text{0.49}} & \multicolumn{1}{c}{70.04$\pm$\text{0.31}} \\
\multicolumn{1}{c|}{} & \multicolumn{1}{c|}{FA} & \multicolumn{1}{c|}{74.68$\pm$\text{0.19}} & \multicolumn{1}{c}{75.95$\pm$\text{0.40}} & \multicolumn{1}{c}{\textbf{74.86$\pm$\text{0.91}}} & \multicolumn{1}{c}{76.92$\pm$\text{0.36}} & \multicolumn{1}{c|}{92.88$\pm$\text{0.89}} & \multicolumn{1}{c}{73.09$\pm$\text{2.54}} \\
\multicolumn{1}{c|}{} & \multicolumn{1}{c|}{MIA} & \multicolumn{1}{c|}{54.59$\pm$\text{0.30}} & \multicolumn{1}{c}{4.02$\pm$\text{0.14}} & \multicolumn{1}{c}{0.08$\pm$\text{0.03}} & \multicolumn{1}{c}{2.53$\pm$\text{0.19}} & \multicolumn{1}{c|}{\textbf{81.41$\pm$\text{0.28}}} & \multicolumn{1}{c}{18.32$\pm$\text{2.01}} \\

\bottomrule[1.5pt]
\end{tabular}
}
\end{table*}

\begin{table*}[hp]
\caption{Comparison of MU performance with four state-of-the-art MU algorithms in VGG16-BN with CIFAR-10 data set. We report the mean$\pm$std of each metric. The best performance is highlighted in \textbf{bold}.}
\label{tab:VGG_std}
\centering
\resizebox{\linewidth}{!}{
\begin{tabular}{ccrrrrrr}
\toprule[1.5pt]\vspace{-0.2em}
\multirow{2}{*}{\begin{tabular}[c]{@{}c@{}}Unlearning\\ scenario\end{tabular}} & \multirow{2}{*}{Metric} & \multicolumn{6}{c}{Methods}{} \\ \cmidrule[0.75pt]{3-8} 
 &  & \multicolumn{1}{c|}{Retrain} & \multicolumn{1}{c}{Amnesiac} & \multicolumn{1}{c}{BadTeacher} & \multicolumn{1}{c}{SalUn} & \multicolumn{1}{c|}{SSD} & \multicolumn{1}{c}{MUSO} \\ \midrule[1pt]

\multicolumn{1}{c|}{\multirow{4}{*}{\begin{tabular}[c]{@{}c@{}}Random\\ (1\%)\end{tabular}}} & \multicolumn{1}{c|}{RA} & \multicolumn{1}{c|}{\text{99.97$\pm$0.01}} & \multicolumn{1}{c}{100.00$\pm$0.00} & \multicolumn{1}{c}{99.95$\pm$0.01} & \multicolumn{1}{c}{100.00$\pm$0.00} & \multicolumn{1}{c|}{93.37$\pm$0.47} & \multicolumn{1}{c}{\textbf{99.99$\pm$0.01}} 
\\
\multicolumn{1}{c|}{} & \multicolumn{1}{c|}{TA} & \multicolumn{1}{c|}{93.38$\pm$0.08} & \multicolumn{1}{c}{93.00$\pm$0.17} & \multicolumn{1}{c}{\textbf{93.12$\pm$0.12}} & \multicolumn{1}{c}{92.71$\pm$0.10} & \multicolumn{1}{c|}{84.05$\pm$0.54} & \multicolumn{1}{c}{92.73$\pm$0.05} 
\\
\multicolumn{1}{c|}{} & \multicolumn{1}{c|}{FA} & \multicolumn{1}{c|}{92.50$\pm$0.81} & \multicolumn{1}{c}{89.96$\pm$0.94} & \multicolumn{1}{c}{92.36$\pm$1.23} & \multicolumn{1}{c}{92.13$\pm$0.25} & \multicolumn{1}{c|}{\textbf{92.60$\pm$0.63}} & \multicolumn{1}{c}{92.20$\pm$0.85} 
\\
\multicolumn{1}{c|}{} & \multicolumn{1}{c|}{MIA} & \multicolumn{1}{c|}{79.55$\pm$0.64} & \multicolumn{1}{c}{41.76$\pm$1.34} & \multicolumn{1}{c}{52.80$\pm$2.49} & \multicolumn{1}{c}{47.15$\pm$0.74} & \multicolumn{1}{c|}{\textbf{77.50$\pm$0.71}} & \multicolumn{1}{c}{55.00$\pm$1.41} 
\\ \midrule

\multicolumn{1}{c|}{\multirow{4}{*}{\begin{tabular}[c]{@{}c@{}}Random\\ (10\%)\end{tabular}}} & \multicolumn{1}{c|}{RA} & \multicolumn{1}{c|}{99.96$\pm$0.01} & \multicolumn{1}{c}{100.00$\pm$0.00} & \multicolumn{1}{c}{\textbf{99.97$\pm$0.01}} & \multicolumn{1}{c}{99.99$\pm$0.01} & \multicolumn{1}{c|}{92.83$\pm$1.04} & \multicolumn{1}{c}{99.98$\pm$0.00} 
\\
\multicolumn{1}{c|}{} & \multicolumn{1}{c|}{TA} & \multicolumn{1}{c|}{92.89$\pm$0.22} & \multicolumn{1}{c}{\textbf{92.90$\pm$0.07}} & \multicolumn{1}{c}{92.91$\pm$0.09} & \multicolumn{1}{c}{92.62$\pm$0.12} & \multicolumn{1}{c|}{85.42$\pm$0.36} & \multicolumn{1}{c}{91.76$\pm$0.13} 
\\
\multicolumn{1}{c|}{} & \multicolumn{1}{c|}{FA} & \multicolumn{1}{c|}{93.41$\pm$0.27} & \multicolumn{1}{c}{93.72$\pm$0.23} & \multicolumn{1}{c}{92.77$\pm$0.19} & \multicolumn{1}{c}{93.59$\pm$0.24} & \multicolumn{1}{c|}{94.89$\pm$0.78} & \multicolumn{1}{c}{\textbf{93.35$\pm$0.07}} 
\\
\multicolumn{1}{c|}{} & \multicolumn{1}{c|}{MIA} & \multicolumn{1}{c|}{80.65$\pm$0.37} & \multicolumn{1}{c}{35.06$\pm$0.41} & \multicolumn{1}{c}{30.69$\pm$0.46} & \multicolumn{1}{c}{32.73$\pm$0.66} & \multicolumn{1}{c|}{\textbf{78.17$\pm$1.24}} & \multicolumn{1}{c}{51.24$\pm$1.58}
\\ \bottomrule[1.5pt]
\end{tabular}
}
\end{table*}

\begin{table*}[hp]
\caption{Comparison of MU performance with four state-of-the-art MU algorithms in ResNet34 with TinyImageNet-200 data set. We report the mean$\pm$std of each metric. The best performance is highlighted in \textbf{bold}.}
\label{tab:Tiny_std}
\centering
\resizebox{\linewidth}{!}{
\begin{tabular}{ccrrrrrr}
\toprule[1.5pt]\vspace{-0.2em}
\multirow{2}{*}{\begin{tabular}[c]{@{}c@{}}Unlearning\\ scenario\end{tabular}} & \multirow{2}{*}{Metric} & \multicolumn{6}{c}{Methods}{} \\ \cmidrule[0.75pt]{3-8} 
 &  & \multicolumn{1}{c|}{Retrain} & \multicolumn{1}{c}{Amnesiac} & \multicolumn{1}{c}{BadTeacher} & \multicolumn{1}{c}{SalUn} & \multicolumn{1}{c|}{SSD} & \multicolumn{1}{c}{MUSO} \\ \midrule[1pt]

\multicolumn{1}{c|}{\multirow{4}{*}{\begin{tabular}[c]{@{}c@{}}Random\\ (1\%)\end{tabular}}} & \multicolumn{1}{c|}{RA} & \multicolumn{1}{c|}{\text{99.98$\pm$0.00}} & \multicolumn{1}{c}{99.98$\pm$0.01} & \multicolumn{1}{c}{\textbf{99.98$\pm$0.00}} & \multicolumn{1}{c}{99.98$\pm$0.01} & \multicolumn{1}{c|}{96.75$\pm$0.33} & \multicolumn{1}{c}{99.97$\pm$0.00} 
\\
\multicolumn{1}{c|}{} & \multicolumn{1}{c|}{TA} & \multicolumn{1}{c|}{66.20$\pm$0.38} & \multicolumn{1}{c}{64.18$\pm$0.08} & \multicolumn{1}{c}{\textbf{64.43$\pm$0.27}} & \multicolumn{1}{c}{64.15$\pm$0.20} & \multicolumn{1}{c|}{61.33$\pm$0.86} & \multicolumn{1}{c}{64.33$\pm$0.12} 
\\
\multicolumn{1}{c|}{} & \multicolumn{1}{c|}{FA} & \multicolumn{1}{c|}{66.63$\pm$0.65} & \multicolumn{1}{c}{64.50$\pm$1.27} & \multicolumn{1}{c}{66.40$\pm$2.55} & \multicolumn{1}{c}{\textbf{66.55$\pm$1.77}} & \multicolumn{1}{c|}{96.30$\pm$0.31} & \multicolumn{1}{c}{66.85$\pm$0.35} 
\\
\multicolumn{1}{c|}{} & \multicolumn{1}{c|}{MIA} & \multicolumn{1}{c|}{42.63$\pm$1.05} & \multicolumn{1}{c}{1.50$\pm$0.28} & \multicolumn{1}{c}{0.45$\pm$0.35} & \multicolumn{1}{c}{2.65$\pm$0.07} & \multicolumn{1}{c|}{87.80$\pm$1.56} & \multicolumn{1}{c}{\textbf{9.35$\pm$0.49}} 
\\ \midrule

\multicolumn{1}{c|}{\multirow{4}{*}{\begin{tabular}[c]{@{}c@{}}Random\\ (10\%)\end{tabular}}} & \multicolumn{1}{c|}{RA} & \multicolumn{1}{c|}{99.97$\pm$0.01} & \multicolumn{1}{c}{99.98$\pm$0.00} & \multicolumn{1}{c}{\textbf{99.97$\pm$0.01}} & \multicolumn{1}{c}{99.72$\pm$0.25} & \multicolumn{1}{c|}{91.21$\pm$1.05} & \multicolumn{1}{c}{99.47$\pm$0.12} 
\\
\multicolumn{1}{c|}{} & \multicolumn{1}{c|}{TA} & \multicolumn{1}{c|}{64.38$\pm$0.02} & \multicolumn{1}{c}{62.52$\pm$0.04} & \multicolumn{1}{c}{\textbf{62.61$\pm$0.74}} & \multicolumn{1}{c}{60.51$\pm$0.47} & \multicolumn{1}{c|}{59.91$\pm$0.41} & \multicolumn{1}{c}{61.42$\pm$1.03} 
\\
\multicolumn{1}{c|}{} & \multicolumn{1}{c|}{FA} & \multicolumn{1}{c|}{64.69$\pm$0.01} & \multicolumn{1}{c}{63.20$\pm$0.35} & \multicolumn{1}{c}{66.32$\pm$0.69} & \multicolumn{1}{c}{\textbf{65.17$\pm$0.35}} & \multicolumn{1}{c|}{91.16$\pm$0.98} & \multicolumn{1}{c}{66.42$\pm$2.20} 
\\
\multicolumn{1}{c|}{} & \multicolumn{1}{c|}{MIA} & \multicolumn{1}{c|}{38.51$\pm$0.51} & \multicolumn{1}{c}{1.40$\pm$0.04} & \multicolumn{1}{c}{0.00$\pm$0.00} & \multicolumn{1}{c}{3.59$\pm$0.13} & \multicolumn{1}{c|}{88.87$\pm$0.12} & \multicolumn{1}{c}{\textbf{11.63$\pm$1.46}} 
\\ \bottomrule[1.5pt]
\end{tabular}
}
\end{table*}

\end{appendices}

\end{document}